\title{\textbf{Correlated Quantization for Faster Nonconvex Distributed Optimization}}
\date{}
\author{%
	Andrei Panferov \qquad Yury Demidovich \qquad Ahmad Rammal \qquad Peter Richtárik \\
	\phantom{x}
	\\
	King Abdullah University of Science and Technology (KAUST) \\
	Thuwal, Saudi Arabia
}
\begin{document}
	
	\maketitle
	
		\begin{abstract}
		Quantization~\citep{QSGD17} is an important (stochastic) compression technique that reduces the volume of transmitted bits
		during each communication round in distributed model training. \citet{suresh2022correlated} introduce correlated quantizers and show their advantages over independent counterparts by analyzing distributed $\mathsf{SGD}$ communication complexity. We analyze the forefront distributed non-convex optimization algorithm $\mathsf{MARINA}$~\citep{gorbunov2022marina} utilizing the proposed correlated quantizers and show that it outperforms the original $\mathsf{MARINA}$ and distributed $\mathsf{SGD}$ of \citet{suresh2022correlated} with regard to the communication complexity. We significantly refine the original analysis of $\mathsf{MARINA}$ without any additional assumptions using the weighted Hessian variance~\citep{tyurin2022weightedab}, and then we expand the theoretical framework of $\mathsf{MARINA}$ to accommodate a substantially broader range of potentially correlated and biased compressors, thus dilating the applicability of the method beyond the conventional independent unbiased compressor setup. Extensive experimental results corroborate our theoretical findings.
	\end{abstract}
	\section{INTRODUCTION} 
	Modern deep neural networks consist of numerous blocks comprising diverse layers that are arranged in a hierarchical structure~\citep{lecun2015deep}. This complexity leads to a high demand for data in these networks~\citep{Attention, LMFSL20}. Moreover, it is worth noting that such models exhibit a distinct nonconvex nature~\citep{pmlr-v38-choromanska15}. Hence, there is a requirement to distribute the data among various computing resources, giving rise to the challenge of effectively orchestrating distributed~\citep{YANG2019} model training. Another incentive for adopting distributed training emerges from the Federated Learning framework~\citep{FLChallenges, KairouzFL}. In this scenario, client-owned data is not readily shared among clients. Consequently, a centralized algorithm becomes responsible for overseeing the training of multiple clients. Given that contemporary Machine Learning models have grown substantially in size, during each round of gradient descent, every client is required to transmit a dense gradient vector often comprised of millions of parameters~\citep{Li20}. This places an overwhelming strain on the communication network. Therefore, it becomes compulsory to explore techniques capable of diminishing the volume of bits transmitted over communication channels while preserving the algorithm's convergence.
	
	There exist various approaches of addressing this problem. The concept of \textit{acceleration} or \textit{momentum}~\citep{NesterovPaper, NesterovBook} in gradient-type methods has received extensive attention in conventional optimization problems. It aims to attain quicker convergence rates, thereby reducing the number of communication rounds~\citep{BeckAccel, Zhu2017KatyushaTF, LanLZ19-1, Kovalev2019DontJT,LiAccDCGD}. Deep Learning practitioners commonly rely on $\mathsf{Adam}$~\citep{KingBa15} or one of its numerous variants, which, among other techniques, also employ momentum. \textit{Local training}, which involves having each participating client perform multiple local optimization steps on their local data before engaging in communication-intensive parameter synchronization, stands out as one of the most practically valuable algorithmic components in Federated Learning model training~\citep{Povey2014ParallelTO,Moritz16,mcmahan17a,mishchenko22b,Condat2023TAMUNADA,Grudzien2023ImprovingAF}.  
	
	Driven by the necessity to create distributed stochastic gradient methods that are provably communication-efficient in nonconvex scenarios, in this paper we consider the optimization problem
	\begin{equation}\label{eq:main_problem}
		\min_{x\in\mathbb{R}^d}\left[f(x) = \frac{1}{n}\sum_{i=1}^nf_i(x)\right]
	\end{equation}
	where $n$ is the number of clients working in parallel, and $f_i : \mathbb{R}^d \to \mathbb{R}$ is a (potentially nonconvex) function representing the loss of the model parameterized by weights $x\in\mathbb{R}^d$ with respect to the training data stored on client~$i.$ We require the problem~\eqref{eq:main_problem} to be well-posed:
	\begin{assumption} \label{ass:diff} The functions $f_1,\dots,f_n: \R^d\to \R$ are differentiable. Moreover, $f$ is lower bounded, i.e., there exists $f^{\inf} \in \R$ such that $ f(x) \geq f^{\inf}$ for all $x \in \R^d$. 
	\end{assumption}
	\begin{assumption}\label{ass:Lplus}
		There exists a constant $L_{+}\geq 0$ such that $\frac{1}{n}\sum_{i=1}^n\norm{\nabla f_i(x) - \nabla f_i(y)}^2\leq L_{+}^2\norm{x-y}^2,$ for all $x,y\in\mathbb{R}^d.$ To avoid ambiguity, let $L_{+}$ be the smallest such number.
	\end{assumption}
	Assumption~\ref{ass:Lplus} is stronger than $L_{-}$-Lipschitz continuity of the gradient of $f$ (by Jensen’s inequality; also, $L_{-}\leq L_{+}$):
	\begin{assumption}
		\label{ass:local_lipschitz_constant}
		There exists a constant $L_{-}\geq0$ such that $\norm{\nabla f(x) - \nabla f(y)} \leq L_{-} \norm{x - y},$ for all $x, y \in \R^d.$
	\end{assumption}
	We are interested in finding an approximately stationary point of the nonconvex problem~\eqref{eq:main_problem}. In other words, our objective is to find a (random) vector $\widehat{x}\in\mathbb{R}^d$ such that $\Exp{\norm{\nabla f(\widehat{x})}^2}\leq\varepsilon^2$, all while minimizing the amount of communication between the $n$ clients and the server.
	
	A typical approach for solving the optimization problem~\eqref{eq:main_problem} involves employing Distributed Gradient Descent ($\mathsf{DGD}$). Starting with an initial iterate $x^0\in\mathbb{R}^d$ and a learning rate $\gamma>0,$ at each
	iteration $t,$ the server broadcasts the current iterate $x^t\in\mathbb{R}^d$ to the clients. Subsequently, each client computes its gradient $\nabla f_i(x^t)$ and sends it back to the server. Finally, the server aggregates all the gradients and utilizes them to perform the gradient descent step 
	$
	x^{t+1} = x^t - \frac{\gamma}{n}\sum_{i=1}^n\nabla f_i(x^t),
	$
	updating the iterate to $x^{t+1}.$ This process is then repeated.
	
	Although $\mathsf{DGD}$ is widely acknowledged as an optimal algorithm for attaining a stationary point with minimal iterations in smooth nonconvex problems~\citep{NesterovBook}, it also places a substantial burden on the communication network. During each communication round, $\mathsf{DGD}$ sends dense gradients to the server. As mentioned earlier, this level of communication load is deemed impractical in numerous scenarios. One approach to address this issue is to employ an unbiased compressor on the transmitted data~\citep{Seide20141bitSG, QSGD17, Lin2017DeepGC, pmlr-v70-zhang17e, Lim20183LCLA, Alistarh-EF-NIPS2018, NEURIPS2018WangAtomo}.
	\begin{definition}\label{def:unbiased_compressor}
		A (possibly randomized) mapping $\mathcal{Q}:\mathbb{R}^d\to\mathbb{R}^d$ is called an unbiased compressor if $\Exp{\mathcal{Q}(a)} = a$ and there exists a constant $\omega\geq0$ such that
		\begin{equation*}
			\Exp{\norm{\mathcal{Q}\left(a\right) - a}^2}\leq\omega\norm{a}^2,\quad\forall x\in\mathbb{R}^d.
		\end{equation*}
		If this condition is satisfied for a compressor $\mathcal{Q},$ we shall write $\mathcal{Q}\in\mathbb{U}\left(\omega\right).$
	\end{definition}
	The subsequent phase in reducing the communication burden within $\mathsf{DGD}$ involves the implementation of client-to-server communication compression. This modification of $\mathsf{DGD}$ is referred to as Distributed Compressed
	Gradient Descent ($\mathsf{DCGD}$), and it conducts iterations of the form
	\begin{equation*}
		x^{t+1} = x^t - \frac{\gamma}{n}\sum_{i=1}^n\mathcal{Q}_i^t\left(\nabla f_i(x^t)\right),
	\end{equation*}
	where $\mathcal{Q}_i^t$ is the compressor used by the client $i$ at iteration~$t.$ $\mathsf{DCGD}$ stands out as one of the simplest distributed methods that employ compression. More advanced methods include $\mathsf{DIANA}$~\citep{Diana}, $\mathsf{MARINA}$~\citep{gorbunov2022marina}.
	
	Most of the common compression techniques can be attributed to one of the two classes: \textit{sparsification} or \textit{quantization}. Sparsification~\citep{Alistarh-EF-NIPS2018} methods reduce communication by only selecting an important sparse subset of the vectors to broadcast at each step. Highly popular sparsifiers are TopK and RandK~\citep{beznosikov2020biased}. More examples can be found in the survey~\citet{demidovich2023guide}. In the present work we focus on the quantization compression technique~\citep{QSGD17}. When provided with the gradient vector at a client, we quantize each component through randomized rounding to a discrete set of values, preserving the statistical properties of the original vector. Below we define several widely used quantizers. 
	\begin{definition}\label{def:std_dithering}
		Let $1\leq q\leq+\infty,$ $a\in\mathbb{R}^d.$ Standard dithering operator~$\mathcal{D}_{sta}^{q,k}$ with $k$ levels 
		$
		0=l_k<l_{k-1}=\frac{1}{k}<\ldots<l_1=\frac{k-1}{k}<l_0=1,
		$
		is defined as follows. If $a=0,$ then $\mathcal{D}_{sta}^{q,k}=0.$ If $a\neq 0,$ let $y_i\eqdef\frac{|a_i|}{\norm{a}_q},$ for all $i\in[d].$ Fix $i,$ let $u\in\{0,1,\ldots,k-1\}$ be such that $l_{u+1}\leq y_i\leq l_u.$ Then $\left(\mathcal{D}_{sta}^{q,k}\right)_i = \norm{a}_q\times\text{sign}(a_i)\times\xi(y_i),$
		where $\xi(y_i) = l_u$ with probability $\frac{y_i-l_{u+1}}{l_u-l_{u+1}}$ or $\xi(y_i) = l_{u+1}$ otherwise.
	\end{definition}
	\begin{definition}[Natural dithering]\label{def:natural_dithering}
		Natural dithering operator~$\mathcal{D}_{nat}^{q,k}$ with $k$ levels is defined in the same way, but with $l_k=0,\;l_{k-1}=\frac{1}{2^{k-1}},\;\ldots,\;l_1=\frac{1}{2},\;l_0=\frac{1}{2^0}=1.$
	\end{definition}
	In particular, \citet{QSGD17} consider $\mathsf{QSGD}$ with independent standard dithering quantizers~$\mathcal{D}_{sta}^{2,k}$. 
	
	The major measure of the effectivness of the distibuted training method is its \textit{communication complexity}. It is the result of multiplying the number of communication rounds required to find $\widehat{x}$ by a properly defined measure of the amount of communication carried out in each round. Consistent with the standard practice in the literature, we make the assumption that client-to-server communication constitutes the primary bottleneck, and therefore, we do not include server-to-client communication in our calculations.
	
	\begin{table*}[!t]
		\centering
		\footnotesize
		\caption{Comparison of communication complexities of different distributed methods combined with different quantizers in the nonconvex regime with homogeneous clients (see Section~\ref{sec:why_correlation}), when $d\leq n.$ In the homogeneous scenario, $L_- = L_+ = L$ and $L_\pm=0$. Notation: $\Delta^0~=~f(x^0) - f^{*}.$ Abbreviations: CQ = ``Correlated Quantizers'', ISCC = ``Importance Sampling Combinatorial Compressors'', IQ = ``Independent Quantizers''.}
		\label{tbl:main}
		\begin{threeparttable}
			\begin{tabular}{ccccc}
				&&&&  \\			
				{\bf Method}  &  \bf \shortstack{Quantizer}&    {\bf \shortstack{Communication Complexity}} & \bf Correlated Compressors & \bf  \shortstack{Reference} \\			
				\hline
				$\mathsf{DCGD}$ 
				& IQ, Def.~\ref{def:multidim_independent}
				& $\mathcal{O}\left(\frac{\Delta^0dL}{\varepsilon^2}\right)$
				&\xmark
				& \citet{suresh2022correlated}\\ 
								\hline
				$\mathsf{DCGD}$ 
				& CQ, Def.~\ref{def:multidim_correlated}		
				& $\mathcal{O}\left(\frac{\Delta^0dL}{\varepsilon^2}\right)$
				&\color{green}\checkmark
				& \citet{suresh2022correlated}\\ 
								
				\hline
				$\mathsf{MARINA}$ 
				& $\mathcal{D}_{nat}^{q,k},$ Def.~\ref{def:natural_dithering} 						
				& $\mathcal{O}\left(\frac{\Delta^0L}{\varepsilon^2}\min\left\lbrace d, 1+\frac{d}{\sqrt{n}}\right\rbrace \right)$ 
				&\xmark
				& \citet{gorbunov2022marina}\\ 
				
				\hline
				$\mathsf{MARINA}$ 
				& IQ, Def.~\ref{def:multidim_independent}
				& $\mathcal{O}\left(\frac{\Delta^0L}{\varepsilon^2}\min\left\lbrace d, 1+\frac{d}{\sqrt{n}}\right\rbrace \right)$ 
				&\xmark
				& \citet{gorbunov2022marina}\\ 
				
				\hline
				\rowcolor{LightCyan}
				$\mathsf{MARINA}$ 
				& ISCC, Asm.~\ref{ass:weightedAB}
				& 	$\mathcal{O}\left( \frac{\Delta^0d}{\varepsilon^2} \min\left\lbrace L, \frac{L}{n}+\frac{\sqrt{\omega+1}L_{avg}}{\sqrt{n}} \right\rbrace\right)$
				&\xmark
				& Corollary~\ref{cor:is_comb_compr}, this work \\ 
				\hline
				\rowcolor{LightCyan}
				$\mathsf{MARINA}$ 
				& CQ, Def.~\ref{def:multidim_correlated}						
				& $\mathcal{O}\left(\frac{\Delta^0L}{\varepsilon^2}\min\left\lbrace d, 1+\frac{d}{	n}\right\rbrace \right)$
				& \color{green}\checkmark
				& Proposition~\ref{proposition:marina_indep_vs_correlated}, this work \\ 
			\end{tabular}
		\end{threeparttable}
	\end{table*}
		\begin{table*}[!t]
		\centering
		\footnotesize
		\caption{Comparison of important characteristics of different quantizers in the nonconvex zero-Hessian-variance regime and when $d\leq n:$ bits sent per client and MSE (Mean Square Error, Section~\ref{subsection:ab_mse}). Notation:  $\mathcal{D}_{sta}^{2,k}$ -- Standard Dithering, $\mathcal{D}_{sta}^{\infty,1}$ -- Ternary Quantization, $\mathcal{D}_{nat}^{q,k}$ -- Natural Dithering. Abbreviations: CQ = ``Correlated Quantizers'', ISCC = ``Importance Sampling Combinatorial Compressors'', IQ = ``Independent Quantizers''.}
		\label{tbl:newmain}
		\begin{threeparttable}
			\begin{tabular}{ccccc}
				&&&&  \\			
				\bf \shortstack{Quantizer}&    {\bf \shortstack{Bits Sent}} & {\bf \shortstack{MSE}} & \bf Correlated? & \bf  \shortstack{Reference} \\			
				\hline
				$\mathcal{D}_{sta}^{2,k},$ Def.~\ref{def:std_dithering}	
				& $\mathcal{O}\left(k(k+\sqrt{d})\right)$ & $\frac{\sqrt{d}}{nk}$
				& \xmark 	
				& \citet{QSGD17}\\ 
				\hline
				$\mathcal{D}_{sta}^{\infty,1},$ Def.~\ref{def:std_dithering}	
				& $31+d\log_23$		
				& $\frac{\sqrt{d}-1}{n}$ & \xmark			
				& \citet{terngrad}\\ 
				\hline
				$\mathcal{D}_{nat}^{q,k},$ Def.~\ref{def:natural_dithering} 						
				& $31+d\log_2(2k+1)$ 
				& $\frac{\sqrt{d}}{n2^{k-1}}$ &\xmark
				& \citet{gorbunov2022marina}\\ 
				\hline
				IQ, Def.~\ref{def:multidim_independent}
				& $32+d$ 
				& $\frac{d\norm{a}^2}{n},$~Cor.~\ref{cor:multidim_independent_mse}  &\xmark
				& \citet{gorbunov2022marina}\\ 
				\hline
				\rowcolor{LightCyan}
				CQ, Def.~\ref{def:multidim_correlated}		
				& $32+d$
				& $\frac{d\norm{a}^2}{n^2},$~Cor.~\ref{cor:multidim_mse_quant}&\color{green}\checkmark
				& \citet{suresh2022correlated}\\ 
				\hline
				\rowcolor{LightCyan}
				ISCC, Asm.~\ref{ass:weightedAB}
				& 	$\frac{\mathcal{O}\left(d\right)}{n}$
				& $\left(\frac{A}{n^2}\sum_{i=1}^n\frac{1}{w_i} - B\right)\left\|a\right\|^2,$~Asm.~\ref{ass:weightedAB} &\xmark
				& Corollary~\ref{cor:is_comb_compr}, this work \\ 
			\end{tabular}
		\end{threeparttable}
	\end{table*}
	
	The literature has introduced several distributed methods more advanced than $\mathsf{DCGD},$ often in conjunction with various quantization techniques. \citet{HorvathNat} consider $\mathsf{DCGD}$ with independent natural dithering quantizers.  \citet{Diana, DianaGeneral} study $\mathsf{DIANA}$ with arbitrary unbiased independent quantizers. \citet{terngrad} examine distributed $\mathsf{SGD}$ algorithm with $\mathcal{D}_{sta}^{\infty,1}.$ To the best of our knowledge, there exists only one paper~\citep{suresh2022correlated} that provides an analysis of unbiased correlated quantizers. However, these quantizers are integrated with the basic $\mathsf{DCGD}$ algorithm, and the optimization problem is exclusively examined in the convex setting. In Table~\ref{tbl:main} we compare communication complexities of the state-of-the-art method $\mathsf{MARINA}$ with correlated quantizers against other proposed combinations of algorithms and quantizers. In fact, DCGD with any quantizers has a communication complexity of $\mathcal{O}\left(\frac{\Delta^0dL}{\varepsilon^2}\right),$ and it is hard to see theoretical advantages of CQ. Our results are better: $\mathsf{MARINA}$ communication complexity is lower than of $\mathsf{DCGD},$ the proposed combinations with quantizers ISCC and CQ allow to reduce it even further. 	Since $L_{avg}$ can be $\sqrt{n}$ times smaller than $L_{+}$, $\mathsf{MARINA}$ with ISCC can converge up to $\sqrt{n}$ times faster than the original method. This suggests that we can develop more effective assumptions for the framework of dependent compressors. In Table~\ref{tbl:newmain} we compare the number of bits sent and MSE of different quantizers. CQ send roughly the same amount of bits as its competitors yet they have lower MSE, which allows $\mathsf{MARINA}$+CQ to achieve lower communication complexity (as shown in Table~\ref{tbl:main}).
	
	\section{CONTRIBUTIONS}
	Our main contributions can be summarized as follows.
	
	$\diamond$ We extend the analysis of the state-of-the-art distributed optimization method $\mathsf{MARINA}$~\citep{gorbunov2022marina} (see Algorithm~\ref{alg:marina}) beyond the use of independent quantizers. We rigorously demonstrate that $\mathsf{MARINA}$ achieves faster convergence when employing Correlated Quantizers (CQ) proposed by~\citet{suresh2022correlated} in the zero-Hessian-variance regime~\citep{szlendak2021permutation}. Specifically, we establish the communication complexity of $\mathsf{MARINA}$ with CQ and showcase a significant enhancement compared to $\mathsf{MARINA}$ integrated with strong baseline independent quantizers (see Table~\ref{tbl:main}; Proposition~\ref{proposition:marina_indep_vs_correlated}). Our experiments confirm the validity of our theoretical insights.
	
	$\diamond$ We compare two distributed algorithms that utilize correlated quantizers: $\mathsf{MARINA}$ and $\mathsf{DCGD}$ (see~Table~\ref{tbl:main}; Section~\ref{sec:exps_var_hess}). Our analysis reveals that in the zero-Hessian-variance regime, $\mathsf{MARINA}$ exhibits substantially lower communication complexity, making it a superior algorithm. Our experimental results corroborate the validity of our theoretical discoveries.
	
	$\diamond$ We demonstrate that CQ from~\cite{suresh2022correlated} exhibit significantly lower (by a factor of $n$) MSE compared to their independent counterparts when applied to homogeneous data (see Table~\ref{tbl:newmain}; Corollaries~\ref{cor:multidim_independent_mse} and~\ref{cor:multidim_mse_quant}). Furthermore, we provide insights into why these compressors are particularly effective when used with $\mathsf{MARINA}$ in the zero-Hessian-variance regime (see Section~\ref{section:zeroHV}).
	
	$\diamond$ We propose a new way to combine CQ with correlated sparsifiers~\citep{szlendak2021permutation}, allowing for even stronger compression (see Algorithm~\ref{alg:permk_cq}; Corollary~\ref{cor:permk_cq}).
	
	$\diamond$ We expand the scope of our findings by demonstrating through experiments that they remain applicable beyond the zero-Hessian-variance regime (see Section~\ref{section:experiments}; Appendix~\ref{sec:apx-experiments}).
	
	$\diamond$ The initial analysis of $\mathsf{MARINA}$ was conducted under the assumption of individual unbiasedness of compressors. We revise it and demonstrate that equivalent convergence results can be achieved for a much wider range of Distributed Mean Estimation algorithms. Additionally, under the weighted AB-inequality Assumption~\ref{ass:weightedAB}~\citep{tyurin2022weightedab}, we enhance the analysis of $\mathsf{MARINA}$ (see Algorithm~\ref{alg:combinatorial_marina}) by investigating its convergence guarantees (see Theorem~\ref{thm:main_result}). We propose an Importance Sampling Combinatorial Compressor which in combination with $\mathsf{MARINA}$ allows for an up to $\sqrt{n}$ times faster convergence than the original method (see Corollary~\ref{cor:is_comb_compr}; Table~\ref{tbl:main}). Our findings are corroborated by experiments (see Appendix~\ref{sec:exp-is-marina}).
	
	\section{MAIN RESULTS}
	Let us introduce an assumption for the set of compressors that is used in many results of the paper.
	\begin{assumption}[Individual Unbiasedness]
		\label{ass:individual_unbiasedness}
		The random operators $\mathcal{Q}_1,\ldots,\mathcal{Q}_n:\mathbb{R}^d\to\mathbb{R}^d$ are unbiased, i.e., $\EE{\mathcal{Q}_i(a)} = a$ for all $i \in \{1,2,\dots,n\}$ and all $a\in\mathbb{R}^d$. If these conditions are satisfied, we write $\{\mathcal{Q}_i\}_{i=1}^n \in \mathbb{U}_{\rm ind}$.
	\end{assumption}
	\subsection{AB-Inequality: Better Control of MSE}\label{subsection:ab_mse}
	Given $n$ vectors $a_1,\ldots,a_n\in\mathbb{R}^d,$ \textit{compression variance} or \textit{Mean Square Error (MSE)} associated with the set of randomized compressors $\{\mathcal{Q}_i\}_{i=1}^n$ is the quantity $\Exp{\left\|\frac{1}{n}\sum_{i=1}^{n}\mathcal{Q}_i\left(a_i\right) - \frac{1}{n}\sum_{i=1}^{n}a_i\right\|^2}.$ In their works, \citet{suresh2017meanestimation, suresh2022correlated} investigate the problem of distributed mean estimation under communication constraints and mainly focus on the task of minimizing the MSE of quantizers. In fact, compression variance naturally emerges in the analysis of~$\mathsf{MARINA}$ (see Algorithm~\ref{alg:marina}), a cutting-edge distributed algorithm designed for solving nonconvex optimization problems, and the theoretical communication complexity of this method linearly depends on the square root of the compression variance. Therefore,  it is crucial to identify compressors with low MSE when analyzing $\mathsf{MARINA}.$ Nevertheless, there exists a trade-off between MSE and communication cost. Typically, as MSE increases, compression becomes more aggressive, but concurrently, the number of communication rounds also increases.
	
	For this reason, recently, \citet{szlendak2021permutation} introduced the following tool for achieving a more precise control of compression variance.
	\begin{algorithm}[!t]
		\caption{MARINA}
		\label{alg:marina}
		\begin{algorithmic}[1]
			\State \textbf{Input:} initial point $x^0\in \R^d$, stepsize $\gamma>0$, probability ${p} \in (0, 1],$ number of iterations $T$
			\State $g^0 = \nabla f(x^0)$
			\For{$t =0,1,\dots,T-1$}
			\State Sample $c_t\sim\mathrm{Bern}(p)$
			\State Broadcast $g^t$ to all workers
			\For{$i=1,\ldots,n$ in parallel}
			\State $x^{t+1} = x^t - \gamma g^t$
			\State \begin{varwidth}[t]{\linewidth}
				$g_i^{t+1}=\nabla f_i(x^{t+1})$ if $c_t=1,$ and $g_i^{t+1}=g_i^t + \mathcal{Q}_i\left(\nabla f_i(x^{t+1}) - \nabla f_i(x^{t})\right)$ otherwise
			\end{varwidth}
			\EndFor
			\State $g^{t+1} = \frac{1}{n}\sum_{i=1}^{n}g_i^{t+1}$
			\EndFor
			\State\textbf{Output:} $\hat{x}^T$ chosen uniformly at random from $\{x^t\}_{t=0}^{T-1}$
		\end{algorithmic}
	\end{algorithm}
	\begin{assumption}[AB-inequality]
		\label{ass:ab}
		There exist constants $A,B\geq0,$ such that random operators $\mathcal{Q}_1,\ldots,\mathcal{Q}_n:\mathbb{R}^d\to\mathbb{R}^d$ satisfy the inequality
		\begin{equation*}
			\Exp{\left\|\frac{1}{n}\sum_{i=1}^{n}\mathcal{Q}_i\left(a_i\right) - \frac{1}{n}\sum_{i=1}^{n}a_i\right\|^2}
			\leq A\left(\frac{1}{n}\sum_{i=1}^{n}\left\|a_i\right\|^2\right) - B\left\|\frac{1}{n}\sum_{i=1}^{n}a_i\right\|^2,\\
		\end{equation*}
		for all $a_1,\ldots,a_n\in\mathbb{R}^d.$ If these conditions are satisfied, we write $\{\mathcal{Q}_i\}_{i=1}^n\in\mathbb{U}\left(A, B\right)$ for the set of operators.
	\end{assumption}
	Note that the MSE of the estimate $\frac{1}{n}\sum_{i=1}^{n}\mathcal{Q}_i\left(a_i\right)$ for $\frac{1}{n}\sum_{i=1}^{n}a_i$ on the right-hand side can be viewed as a variance of the sum of the compressors. The question of interest here is how correlation between the random compression operators, or their independence, can affect the MSE. The following observations were made by~\cite{szlendak2021permutation}. If compressors are unbiased (see Definition~\ref{def:unbiased_compressor}), then the AB-inequality holds without any assumption on their independence. Generally, requiring independence can lead to a significant improvement in the constant $A.$ Formally:
	\begin{proposition}\label{prop:abforunbiased}
		If, for all $i\in[n],$ $\mathcal{Q}_i\in\mathbb{U}\left(\omega_i\right)$ and $\{\mathcal{Q}_i\}_{i=1}^n \in \mathbb{U}_{\rm ind},$ then $\{\mathcal{Q}_i\}_{i=1}^n\in\mathbb{U}\left(\max_i\{\omega_i\},0\right).$ If we further assume that the compressors are independent, then $\{\mathcal{Q}_i\}_{i=1}^n\in\mathbb{U}\left(\frac{1}{n}\max_i\{\omega_i\},0\right).$
	\end{proposition}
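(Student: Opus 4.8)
The plan is to reduce everything to a one-line second-moment computation on the centered error vectors $X_i := \mathcal{Q}_i(a_i) - a_i$. Individual unbiasedness (Assumption~\ref{ass:individual_unbiasedness}) gives $\Exp{X_i} = 0$, and $\mathcal{Q}_i \in \mathbb{U}(\omega_i)$ gives $\Exp{\norm{X_i}^2} \le \omega_i\norm{a_i}^2 \le \maxi\{\omega_i\}\,\norm{a_i}^2$. The object to bound is
\[
\Exp{\norm{\frac{1}{n}\sumin \mathcal{Q}_i(a_i) - \frac{1}{n}\sumin a_i}^2} = \frac{1}{n^2}\,\Exp{\norm{\sumin X_i}^2},
\]
and expanding the square,
\[
\Exp{\norm{\sumin X_i}^2} = \sumin \Exp{\norm{X_i}^2} + \sum_{i\neq j}\Exp{\lin{X_i,X_j}}.
\]

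For the first claim (no independence assumed) I would not attempt to control the cross terms at all: by convexity of $\norm{\cdot}^2$ (Jensen applied to the average) one has $\norm{\sumin X_i}^2 \le n\sumin\norm{X_i}^2$, so taking expectations $\Exp{\norm{\sumin X_i}^2} \le n\sumin\Exp{\norm{X_i}^2} \le n\maxi\{\omega_i\}\sumin\norm{a_i}^2$. Dividing by $n^2$ gives $\Exp{\norm{\tfrac1n\sumin \mathcal{Q}_i(a_i) - \tfrac1n\sumin a_i}^2} \le \maxi\{\omega_i\}\cdot\tfrac1n\sumin\norm{a_i}^2$, i.e.\ $\{\mathcal{Q}_i\}_{i=1}^n\in\mathbb{U}(\maxi\{\omega_i\},0)$.

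For the second claim I would keep the expanded identity and observe that mutual independence of $\mathcal{Q}_1,\dots,\mathcal{Q}_n$ makes $X_i$ and $X_j$ independent for $i\neq j$, whence $\Exp{\lin{X_i,X_j}} = \lin{\Exp{X_i},\Exp{X_j}} = 0$ since $\Exp{X_i}=0$. Thus $\Exp{\norm{\sumin X_i}^2} = \sumin\Exp{\norm{X_i}^2} \le \maxi\{\omega_i\}\sumin\norm{a_i}^2$, and dividing by $n^2$ yields $\Exp{\norm{\tfrac1n\sumin \mathcal{Q}_i(a_i) - \tfrac1n\sumin a_i}^2} \le \tfrac1n\maxi\{\omega_i\}\cdot\tfrac1n\sumin\norm{a_i}^2$, i.e.\ $\{\mathcal{Q}_i\}_{i=1}^n\in\mathbb{U}(\tfrac1n\maxi\{\omega_i\},0)$.

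There is no genuine obstacle here; the computation is elementary. The only point worth flagging is the source of the factor-$n$ gap between the two regimes: in the independent case the off-diagonal covariances vanish exactly, whereas in the general case we discard them only through the lossy bound $\norm{\sumin X_i}^2 \le n\sumin\norm{X_i}^2$. In both regimes $B=0$ because we make no use of cancellation with the averaged target $\tfrac1n\sumin a_i$; extracting a strictly positive $B$ would require further structure and is not needed for the stated result.
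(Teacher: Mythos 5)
Your proof is correct: the Jensen/Cauchy--Schwarz bound $\norm{\sum_i X_i}^2 \le n\sum_i\norm{X_i}^2$ gives the dependent case with $A=\max_i\{\omega_i\}$, and the vanishing of the cross-covariances $\Exp{\lin{X_i,X_j}}=0$ under independence plus individual unbiasedness gives the extra factor $\frac{1}{n}$, exactly as required by Assumption~\ref{ass:ab} with $B=0$. The paper itself does not reprove this proposition (it is imported from \citet{szlendak2021permutation}), and your argument is essentially the same standard second-moment computation used there, so there is nothing to add.
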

	\subsection{Why Correlation May Help}\label{sec:why_correlation}
	It could be feasible to decrease the compression variance by introducing dependencies between the compressors. The right-hand side of Assumption~\ref{ass:ab} can be rewritten as 
	\begin{equation*}
			A\left[\left(1-\frac{B}{A}\right)\left(\frac{1}{n}\sum_{i=1}^{n}\norm{a_i}^2\right)+\frac{B}{A}\Var\left(a_1,\ldots,a_n\right)\right],
	\end{equation*}
	where $\Var\left(a_1,\ldots,a_n\right)=\frac{1}{n}\sum_{i=1}^{n}\left\|a_i - \sum_{i=1}^{n}a_i\right\|^2$ is the variance of the vectors $\{a_i\}_{i=1}^n.$ It is preferable to design compressors with $B$ as large as $A,$ since $\Var\left(a_1,\ldots,a_n\right)$ can be much smaller than $\frac{1}{n}\sum_{i=1}^{n}\norm{a_i}^2,$  This result was obtained by~\citet{szlendak2021permutation}: PermK sparsifiers introduced in this work are designed so that the sparsified vectors have zero scalar products, which enforces $A=B=1.$ Nevertheless, the approach of zeroing out scalar products does not apply to a quantization technique for the general set of vectors $\{a_i\}_{i=1}^n,$ as quantization does not inherently enforce sparsity in vectors. Instead, we demonstrate below that a thoughtfully designed dependence between the unbiased quantizers can yield an even more substantial enhancement of the constant $A$, while the constant $B$ remains comparatively smaller and equals zero. Further, we introduce a regime in which we attain theoretical improvements through correlation, provide an explanation for why this regime is more encompassing than the one involving clients that send homogeneous data (homogeneous clients regime), and clarify why $\mathsf{MARINA}$ particularly excels within it.
	
	\subsection{Zero-Hessian-Variance Regime}\label{section:zeroHV}
	The concept of Hessian variance was introduced by~\citet{szlendak2021permutation} and allowed the authors to refine the communication complexity analysis of~$\mathsf{MARINA}.$ First, let us provide a formal definition of it.
	\begin{definition}[Hessian Variance]\label{def:hessian_variance}
		Let $L_{\pm}\geq0$ be the smallest constant such that
		\begin{equation*}
			\frac{1}{n}\sum_{i=1}^n\norm{\nabla f_i(x) - \nabla f_i(y)}^2 - \norm{\nabla f(x) - \nabla f(y)}^2 \leq L_{\pm}^2\norm{x-y}^2,\quad x,y\in\mathbb{R}^d.
		\end{equation*}
		The quantity $L_{\pm}^2$ is called Hessian variance.
	\end{definition}
	Our theoretical results cover the setting when $L_{\pm}=0.$ It extends the case when clients are either homogeneous or nearly homogeneous with linear perturbations.
	\begin{proposition}\label{proposition:zero_hessian_implies_homogeneity}
		In the homogeneous clients regime Hessian variance is equal to $0.$ Moreover, if loss functions on all clients differ only by a linear term, then Hessian variance is equal to $0.$
	\end{proposition}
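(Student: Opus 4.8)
### Proof Proposal

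The plan is to verify both claims directly from Definition~\ref{def:hessian_variance}, since both are essentially immediate once the relevant gradient differences are written out explicitly.

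For the first claim, I would start from the definition of the homogeneous clients regime: all local functions coincide, i.e., $f_i = f$ for all $i \in [n]$ (equivalently, all clients hold identical data, so $\nabla f_i = \nabla f$). Substituting this into the left-hand side of the Hessian-variance inequality gives $\frac{1}{n}\sum_{i=1}^n \norm{\nabla f(x) - \nabla f(y)}^2 - \norm{\nabla f(x) - \nabla f(y)}^2 = \norm{\nabla f(x) - \nabla f(y)}^2 - \norm{\nabla f(x) - \nabla f(y)}^2 = 0$ for all $x,y$. Hence the smallest constant $L_{\pm}$ for which the bound $0 \le L_{\pm}^2\norm{x-y}^2$ holds is $L_{\pm} = 0$, which is what we want.

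For the second claim, suppose the loss functions differ only by a linear term: $f_i(x) = f(x) + \langle b_i, x\rangle$ for some vectors $b_i \in \R^d$. Consistency with $f = \frac{1}{n}\sum_i f_i$ forces $\frac{1}{n}\sum_i b_i = 0$. Then $\nabla f_i(x) - \nabla f_i(y) = \nabla f(x) - \nabla f(y)$ for every $i$, because the linear term $\langle b_i, x\rangle$ has constant gradient $b_i$, which cancels in the difference. So again each summand on the left equals $\norm{\nabla f(x) - \nabla f(y)}^2$, the average is $\norm{\nabla f(x) - \nabla f(y)}^2$, and subtracting $\norm{\nabla f(x) - \nabla f(y)}^2$ leaves $0$; thus $L_{\pm} = 0$. (Note the first claim is in fact the special case $b_i = 0$.)

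There is no real obstacle here — the only mild subtlety is making precise what "homogeneous clients" and "differ only by a linear term" mean, and observing that in both cases the gradient \emph{differences} $\nabla f_i(x) - \nabla f_i(y)$ are identical across $i$ and equal to $\nabla f(x) - \nabla f(y)$; the cancellation in Definition~\ref{def:hessian_variance} is then automatic. If one wants a slightly more general hypothesis for the second part, it suffices to assume the $f_i$ differ pairwise by affine functions, i.e., $f_i - f_j$ is affine for all $i,j$; the same argument applies verbatim since affine functions have constant gradients.
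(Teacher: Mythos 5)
Your proposal is correct and follows essentially the same route as the paper: in both cases the gradient differences $\nabla f_i(x) - \nabla f_i(y)$ coincide across clients (and equal $\nabla f(x) - \nabla f(y)$, since linear terms have constant gradients that cancel), so the left-hand side of Definition~\ref{def:hessian_variance} vanishes identically and $L_{\pm}=0$. The paper writes the perturbed case as $f_i(x)=\varphi(x)+b_i^{\top}x+c_i$ without normalizing the $b_i$, but this is only a cosmetic difference from your parametrization.
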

	The scenario in which $L_{\pm}=0$ holds is particularly advantageous for the $\mathsf{MARINA}$ algorithm. Owing to the structure of the local gradient updates (see line 8 of Algorithm~\ref{alg:marina}), the vectors $\nabla f_i(x^{t+1}) - \nabla f_i(x^t)$ that need to be compressed and transmitted from clients to the server during the communication round exhibit homogeneity in the zero-Hessian-variance regime. While achieving the zero-Hessian-variance regime in practice can be challenging, practical problems can indeed have $L_{\pm}$ values very close to zero. We delve into the theoretical properties of the correlated quantizers proposed by~\citet{suresh2022correlated} in the context of homogeneous data and illustrate their advantages over previously proposed quantizers on homogeneous data.
	\subsection{Superior Quantizers for $\mathsf{MARINA}$}
	
	We start with an introduction to baseline independent quantizers. For simplicity, we initially define them in a one-dimensional case and outline their properties. We focus on the homogeneous case where $a_i=a\in\mathbb{R}^d$ for all~$i\in[n].$
	\begin{definition}\label{def:onedim_independent}
		Suppose that, for all $i\in[n],$  $a_i=a\in[l,r],$ $l, r\in\mathbb{R}.$ Define independent randomized quantizers $\{\mathcal{Q}_i\}_{i=1}^n$ such that $\mathcal{Q}_i(a_i) = r$ with probability $\frac{a_i - l}{r - l}$ and $\mathcal{Q}_i(a_i) = l$ otherwise, $i\in[n].$
	\end{definition}
	\begin{proposition}\label{proposition:onedim_independent_mse}
		Quantizers $\{\mathcal{Q}_i\}_{i=1}^n$ from Definition~\ref{def:onedim_independent} are individually unbiased. The MSE of the quantizers $\{\mathcal{Q}_i\}_{i=1}^n$ can be bounded from above in the following way:
		\begin{equation*}
			\EE{\norm{\frac{1}{n}\sum\limits_{i=1}^n \parens{a_i- \mathcal{Q}_i(a_i)}}^2}\leq \frac{(r-l)^2}{4n}.
		\end{equation*}
	\end{proposition}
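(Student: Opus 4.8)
The plan is to exploit the fact that in the homogeneous setting all the $a_i$ equal the same scalar $a \in [l,r]$, so each $\mathcal{Q}_i(a_i)$ is a two-point random variable taking value $r$ with probability $p := \frac{a-l}{r-l}$ and $l$ with probability $1-p$. First I would verify unbiasedness by a one-line computation: $\EE{\mathcal{Q}_i(a_i)} = r\cdot p + l\cdot(1-p) = l + (r-l)p = a$, which gives individual unbiasedness. Then, since the quantizers are independent and each has mean $a$, the errors $e_i := a - \mathcal{Q}_i(a_i)$ are i.i.d., mean-zero random variables, and $\EE{\norm{\frac{1}{n}\sum_{i=1}^n e_i}^2} = \frac{1}{n^2}\sum_{i=1}^n \EE{e_i^2} = \frac{1}{n}\Var(\mathcal{Q}_1(a))$ (using that cross terms vanish by independence and zero mean, and that everything is one-dimensional here).

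The remaining step is to bound the per-coordinate variance $\Var(\mathcal{Q}_1(a))$. Since $\mathcal{Q}_1(a)$ is supported on the two points $l$ and $r$, its variance is $(r-l)^2 p(1-p)$, and $p(1-p) \le \tfrac14$ for any $p \in [0,1]$. Hence $\Var(\mathcal{Q}_1(a)) \le \frac{(r-l)^2}{4}$, and combining with the previous display yields the claimed bound $\frac{(r-l)^2}{4n}$.

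I do not anticipate a genuine obstacle here: the statement is essentially a variance computation for a scaled Bernoulli plus a standard i.i.d.\ averaging argument. The only point to be mildly careful about is that the proposition is stated for $a \in \mathbb{R}^d$ while the quantizer in Definition~\ref{def:onedim_independent} is one-dimensional; but the homogeneous one-dimensional setup means the norm squared is just the square of a scalar, so no coordinatewise summation is needed and the argument goes through verbatim. If one instead wanted the genuinely $d$-dimensional coordinatewise version, one would sum the per-coordinate variances, but that is not what is being asked for in this particular proposition.
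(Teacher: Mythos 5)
Your proposal is correct and follows essentially the same route as the paper's proof: verify unbiasedness of the two-point quantizer directly, note that independence makes the cross terms vanish so the MSE is $\frac{1}{n^2}\sum_i\mathrm{Var}(\mathcal{Q}_i(a))$, and bound each variance by $(r-l)^2p(1-p)\le\frac{(r-l)^2}{4}$ — which is exactly the Bernoulli-form expression the paper arrives at after expanding the second moment. No gaps.
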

	
	Let us generalize the quantizers from Definition~\ref{def:onedim_independent} to multiple dimensions. 
	\begin{definition}\label{def:multidim_independent}
		Assume that each $a_i=a$ is a $d$-dimensional vector and that $\mathcal{Q}_i$ quantizes each coordiante independently as in Definition~\ref{def:onedim_independent} with $l=-\norm{a},$ $r=\norm{a}.$ We'll refer to them as Independent Quantization (IQ).
	\end{definition}
	\begin{corollary}\label{cor:multidim_independent_mse}
		Suppose each $a_i\in\mathbb{R}^d,$ $i\in[n].$ Then the MSE of quantizers $\{\mathcal{Q}_i\}_{i=1}^n$ of the set of vectors $\{a_i\}_{i=1}^n$ can be bounded from above in the following way:
		\begin{equation*}
			\EE{\norm{\frac{1}{n}\sum\limits_{i=1}^n \parens{a_i - \mathcal{Q}_i(a_i)}}^2}\leq \frac{d\norm{a}^2}{n}.
		\end{equation*}
	\end{corollary}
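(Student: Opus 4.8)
The plan is to reduce the multidimensional MSE bound to the one-dimensional case handled by Proposition~\ref{proposition:onedim_independent_mse}. Since each $\mathcal{Q}_i$ quantizes coordinate-wise and independently across coordinates (Definition~\ref{def:multidim_independent}), and the error vector has components $\parens{a_i - \mathcal{Q}_i(a_i)}_j$ that depend only on the $j$-th coordinate, I would first write
\begin{equation*}
	\EE{\norm{\frac{1}{n}\sum_{i=1}^n \parens{a_i - \mathcal{Q}_i(a_i)}}^2} = \sum_{j=1}^d \EE{\parens{\frac{1}{n}\sum_{i=1}^n \parens{(a_i)_j - \mathcal{Q}_i(a_i)_j}}^2},
\end{equation*}
using linearity of expectation to split the squared Euclidean norm into its $d$ coordinate contributions.

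For each fixed coordinate $j$, the scalars $(a_i)_j = a_j$ (homogeneous case) all lie in the interval $[l,r] = [-\norm{a}, \norm{a}]$, because $|a_j| \le \norm{a}$. Hence Proposition~\ref{proposition:onedim_independent_mse} applies with this choice of $l,r$, giving
\begin{equation*}
	\EE{\parens{\frac{1}{n}\sum_{i=1}^n \parens{a_j - \mathcal{Q}_i(a_i)_j}}^2} \leq \frac{(r-l)^2}{4n} = \frac{(2\norm{a})^2}{4n} = \frac{\norm{a}^2}{n}.
\end{equation*}
Summing this bound over $j = 1, \dots, d$ yields the claimed $\frac{d\norm{a}^2}{n}$.

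The only subtlety — and the step I would state carefully — is the decomposition of the squared norm into independent coordinate terms. This needs the facts that (i) the $j$-th component of $\mathcal{Q}_i(a_i)$ is a function only of $(a_i)_j$ and fresh randomness, so cross terms across coordinates factor through expectations, and (ii) in the homogeneous setting all $a_i$ coincide with $a$, so the interval $[-\norm{a},\norm{a}]$ indeed contains every $(a_i)_j$. Neither requires the coordinates of the error to be independent across $j$ — only that the sum of squares of expectations of products vanishes off-diagonal, which follows from within-coordinate independence across clients $i$ combined with mean-zero errors; but in fact even that is unnecessary since we already expanded a single Euclidean norm into a plain sum of coordinate squares before taking expectations, and linearity of $\mathbb{E}$ does the rest. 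I do not anticipate any real obstacle; this is a routine lifting of the scalar estimate, with the $(r-l)^2 = 4\norm{a}^2$ substitution being the one place to be careful about constants.
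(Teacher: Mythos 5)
Your proof is correct and follows essentially the same route as the paper, which simply applies the one-dimensional bound coordinate-wise and sums the variances (the paper's one-line proof cites Theorem~\ref{thm:mse_quant}, apparently a typo for Proposition~\ref{proposition:onedim_independent_mse}, which is exactly what you use). Your extra care about the coordinate decomposition and the $(r-l)^2 = 4\norm{a}^2$ constant is sound and matches the intended argument.
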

	Algorithm~\ref{alg:correlated_quantizers} provides a definition of one-dimensional correlated quantizers, that generalize their independent counterparts defined above. We aim to establish individual unbiasedness and bound the MSE of $\{\mathcal{Q}_i\}_{i=1}^n,$ defined in Algorithm~\ref{alg:correlated_quantizers}, in relation to the set of numbers $\{a_i\}_{i=1}^n,$ in the homogeneous case when $a_i=a\in[l,r],$ for all $i\in[n].$
	\begin{algorithm}[H]
            \caption{\textsc{CQ (one-dimensional variant)} \citep{suresh2022correlated}}
		\label{alg:correlated_quantizers}
            \begin{algorithmic}[1]
			\State\textbf{Input}: $a_1, a_2,\ldots, a_n, l, r\in\mathbb{R};$ $\forall i\in [n],$ $a_i \in [l, r]$
			\State Generate $\pi$, a random permutation of $\{0, 1,\ldots, n-1\}$
			\For{$i = 1 \ \text{to} \ n$}
				\State $y_i = \frac{a_i -l}{r-l}$.
				\State $U_i = \frac{\pi_i}{n} + \gamma_i$, where $\gamma_i$ has a continuous uniform distribution $\mathrm{U}[0, 1/n)$.
				\State $\mathcal{Q}_i(a_i) = (r-l) 1_{U_i < y_i}$.
                \EndFor
			\State \textbf{Output}: $\frac{1}{n} \sum^n_{i=1} \mathcal{Q}_i(a_i)$.
            \end{algorithmic}
	\end{algorithm}
	\begin{theorem}\label{thm:mse_quant}
		Suppose all the inputs $a_i=a,$ $i\in[n],$ lie in the range $[l, r].$ Then $\{\mathcal{Q}_i\}_{i=1}^n$ from Algorithm~\ref{alg:correlated_quantizers} are individually unbiased and the following upper bound on the MSE of the set of quantizers $\{\mathcal{Q}_i\}_{i=1}^n$ holds true:
		\begin{equation*}
			\EE{\norm{\frac{1}{n}\sum\limits_{i=1}^n \parens{a_i - \mathcal{Q}_i(a_i)}}^2}\leq\frac{(r - l)^2}{4n^2}.
		\end{equation*}
	\end{theorem}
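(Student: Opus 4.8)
The plan is to collapse this one-dimensional, vector-free claim into a single scalar variance computation, exploiting the \emph{stratified} structure that the random permutation $\pi$ together with the dithering noise $\gamma_i$ imposes on the thresholds $U_1,\dots,U_n$. Since all inputs coincide, write $y\eqdef y_i=\frac{a-l}{r-l}\in[0,1]$, the same for every $i$, and identify $\mathcal{Q}_i(a_i)$ with the $\{l,r\}$-valued quantizer of Definition~\ref{def:onedim_independent} (so $\mathcal{Q}_i(a_i)=r$ when $U_i<y$ and $=l$ otherwise).

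\emph{Unbiasedness.} Because $\pi$ is a uniformly random permutation of $\{0,1,\dots,n-1\}$, the marginal law of $\pi_i$ is uniform on $\{0,1,\dots,n-1\}$; combined with $\gamma_i\sim\mathrm{U}[0,1/n)$ drawn independently of $\pi$, this makes $U_i$ uniform on $[0,1)$, hence $\prob(U_i<y)=y$ and $\EE{\mathcal{Q}_i(a_i)}=ry+l(1-y)=a$. So $\{\mathcal{Q}_i\}_{i=1}^n\in\mathbb{U}_{\rm ind}$ in the individual-unbiasedness sense of Assumption~\ref{ass:individual_unbiasedness}.

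\emph{Reduction to a scalar.} Let $S\eqdef\#\{i\in[n]:U_i<y\}$. Then $\frac1n\sum_{i=1}^n\mathcal{Q}_i(a_i)=l+\frac{(r-l)S}{n}$ while $\frac1n\sum_{i=1}^n a_i=a=l+(r-l)y$, so that
\[
\frac1n\sum_{i=1}^n\bigl(a_i-\mathcal{Q}_i(a_i)\bigr)=\frac{r-l}{n}\,(ny-S),
\qquad\text{hence}\qquad
\EE{\norm{\tfrac1n\sum_{i=1}^n(a_i-\mathcal{Q}_i(a_i))}^2}=\frac{(r-l)^2}{n^2}\,\EE{(S-ny)^2}.
\]
It therefore suffices to prove $\EE{(S-ny)^2}\le\frac14$.

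\emph{The key observation.} Since $\pi$ is a permutation, $\{\pi_i/n\}_{i=1}^n=\{0,1/n,\dots,(n-1)/n\}$, so each $U_i$ lands uniformly in a \emph{distinct} sub-interval $\bigl[\pi_i/n,(\pi_i+1)/n\bigr)$, and these $n$ sub-intervals tile $[0,1)$. Put $m\eqdef\floor{ny}\in\{0,\dots,n-1\}$ and $\delta\eqdef ny-m\in[0,1)$ (the degenerate case $y=1$ gives $S=n=ny$ and MSE $0$, so assume $y<1$). For the index with $\pi_i/n=j/n$, $j<m$, the whole sub-interval lies left of $y$, so $\one_{U_i<y}=1$; for $j>m$ it lies right of $y$, so $\one_{U_i<y}=0$; and for the unique index with $\pi_i=m$ we have $\one_{U_i<y}=\one_{\gamma_i<\delta/n}$, which (conditionally on $\pi$, using $\gamma_i\perp\pi$) is $\mathrm{Bernoulli}(\delta)$. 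Thus $S=m+B$ with $B\sim\mathrm{Bernoulli}(\delta)$, so $S-ny=B-\delta$ and $\EE{(S-ny)^2}=\Var(B)=\delta(1-\delta)\le\frac14$. Substituting back yields the claimed bound $\frac{(r-l)^2}{4n^2}$.

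I do not expect a genuine obstacle: once the stratification picture is in place, the rest is a one-line computation. The only points needing care are making precise that $\gamma_i$ is independent of $\pi$ (so the marginal of $U_i$ is truly uniform and the conditional argument for the boundary stratum $\pi_i=m$ is valid), correctly matching the $\{l,r\}$-valued convention of Definition~\ref{def:onedim_independent} to the $(r-l)\one_{U_i<y_i}$ shorthand used in Algorithm~\ref{alg:correlated_quantizers}, and handling the edge cases $y\in\{0,1\}$ separately (both trivial). Comparing the resulting $\frac{(r-l)^2}{4n^2}$ against the $\frac{(r-l)^2}{4n}$ of Proposition~\ref{proposition:onedim_independent_mse} then exhibits the promised factor-$n$ gain from correlation.
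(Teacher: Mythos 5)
Your proposal is correct, and it proves the theorem by a genuinely different route than the paper. The paper's proof reduces to $l=0$, $r=1$ and expands the squared aggregate error into $n$ diagonal Bernoulli-variance terms plus all pairwise cross terms, then computes $\EE{\mathcal{Q}_i(a)\mathcal{Q}_j(a)}$ for $i\neq j$ by conditioning on the relative order of $\pi_i,\pi_j$ and summing over the value of $\pi_i$ with floor-function bookkeeping, ending with the exact error $c_a=\parens{na-\floor{na}}\parens{\floor{na}+1-na}\leq\frac14$. You instead exploit the stratified structure directly: since the values $\pi_i/n$ tile $[0,1)$ into $n$ disjoint strata and each $U_i$ is uniform in its own stratum, the count $S=\#\{i:U_i<y\}$ is deterministic except on the single stratum containing $y$, so $S=\floor{ny}+B$ with $B\sim\mathrm{Bernoulli}(\delta)$, $\delta=ny-\floor{ny}$, and $\EE{(S-ny)^2}=\delta(1-\delta)$ — exactly the paper's $c_a$ — obtained in one line with no covariance computation. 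Your argument is shorter, shows the bound is an exact evaluation of the MSE (tight at $\delta=\tfrac12$), and makes the mechanism of the factor-$n$ gain transparent; the paper's term-by-term covariance calculation is the style that carries over to non-homogeneous inputs, where your single-Bernoulli collapse breaks down because each client would have its own threshold $y_i$ and its own boundary stratum. The care points you flag are the right ones and are handled: reading the algorithm's $(r-l)\one_{U_i<y_i}$ output in the affine $\{l,r\}$-valued convention so that individual unbiasedness holds (the paper does the analogous normalization by proving the $l=0$, $r=1$ case and rescaling), the independence of $\gamma_i$ from $\pi$ needed both for the uniform marginal of $U_i$ and for the Bernoulli law of the boundary stratum, and the trivial edge cases $y\in\{0,1\}$.
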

	The generalization to multiple dimensions is performed in the same way as in the independent case.
	\begin{definition}\label{def:multidim_correlated}
		Assume that each $a_i=a$ is a $d$-dimensional vector and that $\mathcal{Q}_i$ quantizes each coordiante independently as in Algorithm~\ref{alg:correlated_quantizers} with $l=-\norm{a},$ $r=\norm{a}.$ We'll refer to them as Correlated Quantization (CQ).
	\end{definition}
	\begin{corollary}\label{cor:multidim_mse_quant}
		Suppose each $a_i=a\in\mathbb{R}^d,$ $i\in[n].$ Then $\{\mathcal{Q}_i\}_{i=1}^n$ are individually unbiased and the MSE of quantizers $\{\mathcal{Q}_i\}_{i=1}^n$ associated with the set of vectors $\{a_i\}_{i=1}^n$ can be bounded from above in the following way:
		\begin{equation*}
			\EE{\norm{\frac{1}{n}\sum\limits_{i=1}^n \parens{a_i - \mathcal{Q}_i(a_i)}}^2}\leq\frac{d\norm{a}^2}{n^2}.
		\end{equation*}
	\end{corollary}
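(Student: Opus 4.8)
The plan is to deduce this corollary from the one–dimensional Theorem~\ref{thm:mse_quant}, exploiting that the quantizers of Definition~\ref{def:multidim_correlated} act by running Algorithm~\ref{alg:correlated_quantizers} on each coordinate with the fixed range $l=-\norm{a}$, $r=\norm{a}$. Both assertions --- individual unbiasedness and the MSE bound --- decompose along coordinates, so it should suffice to invoke the scalar result $d$ times and sum. First I would record the trivial but necessary fact that, since $|a_j|\le\norm{a}$ for every coordinate $j\in[d]$, the scalar input $a_j$ does lie in $[l,r]=[-\norm{a},\norm{a}]$, so the hypotheses of Theorem~\ref{thm:mse_quant} are met coordinate by coordinate.

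\textit{Unbiasedness.} For each client $i$ and coordinate $j$, the scalar part of Theorem~\ref{thm:mse_quant} gives $\EE{\left(\mathcal{Q}_i(a)\right)_j}=a_j$. Collecting the $d$ coordinates yields $\EE{\mathcal{Q}_i(a)}=a$ for every $i$, i.e.\ $\{\mathcal{Q}_i\}_{i=1}^n\in\mathbb{U}_{\mathrm{ind}}$.

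\textit{MSE bound.} Using $\norm{v}^2=\sum_{j=1}^d v_j^2$ for $v\in\R^d$ together with linearity of expectation,
\begin{equation*}
\EE{\norm{\frac{1}{n}\sum_{i=1}^n\parens{a-\mathcal{Q}_i(a)}}^2}=\sum_{j=1}^d\EE{\parens{\frac{1}{n}\sum_{i=1}^n\parens{a_j-\left(\mathcal{Q}_i(a)\right)_j}}^2}.
\end{equation*}
For each fixed $j$ the inputs on all clients coincide with the single scalar $a_j\in[-\norm{a},\norm{a}]$, so Theorem~\ref{thm:mse_quant} (with $l=-\norm{a}$, $r=\norm{a}$) bounds the $j$-th summand by $\tfrac{(r-l)^2}{4n^2}=\tfrac{\norm{a}^2}{n^2}$. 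Summing over $j\in[d]$ gives the claimed $\tfrac{d\norm{a}^2}{n^2}$. I would also remark that the cross-coordinate independence stipulated in Definition~\ref{def:multidim_correlated} plays no role here: the coordinate split of $\norm{\cdot}^2$ plus linearity of expectation already suffices.

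\textit{Where the work sits.} There is essentially no obstacle at this level --- all the substance is inside the scalar Theorem~\ref{thm:mse_quant} (the permutation coupling, the analysis of the $U_i<y_i$ events, and the $1/n^2$ rather than $1/n$ scaling). The only things to be careful about are: (i) confirming the range condition $a_j\in[-\norm{a},\norm{a}]$ used to instantiate the per-coordinate quantizer, and (ii) making sure degenerate coordinates ($a_j=0$, or the trivial case $a=0$) are handled --- both are already subsumed by the one-dimensional statement. So the corollary reduces to bookkeeping over the $d$ coordinates.
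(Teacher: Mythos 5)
Your proposal is correct and follows the same route as the paper, whose proof is exactly the one-line observation that one applies Theorem~\ref{thm:mse_quant} coordinate-wise (with $r-l=2\norm{a}$, so each coordinate contributes $\norm{a}^2/n^2$) and sums the $d$ variances; your additional checks of the range condition and unbiasedness are just the bookkeeping the paper leaves implicit.
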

	Notice that in both Corollaries~\ref{cor:multidim_independent_mse} and~\ref{cor:multidim_mse_quant}, the term $\norm{a}^2$ in the numerator can be replaced with $\frac{1}{n}\sum_{i=1}^{n}\norm{a_i}^2$. Consequently, IQ belongs to $\mathbb{U}\(\frac{d}{n},0\)$. Importantly, any client-wise independent quantization satisfying Assumption~\ref{ass:ab} will do so with $A=\frac{\widehat{\omega}}{n}$ and $B=0$, where $\widehat{\omega}$ is independent of $n$. Conversely, CQ adheres to Assumption~\ref{ass:ab} with $A=\frac{d}{n^2}$. As a result, with a fixed value of $d$, CQ has an $A$ constant that is smaller by a factor of $\mathcal{O}\(\frac{1}{n}\)$.
	
	Further, we analyze $\mathsf{MARINA}$ in the zero-Hessian-variance regime with independent and correlated quantizers.
  
	\begin{proposition}\label{proposition:marina_indep_vs_correlated}
		Let $L_{\pm}=0.$ Denote by $\cC_{\text{cor}}$ the communication complexity per client in $\mathsf{MARINA}$ with CQ (Definition~\ref{def:multidim_correlated}). Similarly, denote by $\cC_{\text{ind}}$ the communication complexity per client in $\mathsf{MARINA}$ with IQ (Definition~\ref{def:multidim_independent}). Then
        \begin{equation*}
            \frac{\mathcal{C}_{\text{ind}}}{\mathcal{C}_{\text{cor}}} = \frac{1 + \sqrt{\frac{\left(1-p\right)}{p}\frac{d}{4n} }}{1 + \sqrt{\frac{\left(1-p\right)}{p}\frac{d}{4n^2} }}.
        \end{equation*}
        That is, $\forall p \in [0,1]$, $\cC_{\text{cor}} \leq \cC_{\text{ind}}$. In particular, we show that $\cC_{\text{cor}} = \mathcal{O}\left(\frac{\Delta^0L}{\varepsilon^2}\min\left\lbrace d, 1 + \frac{d}{n} \right\rbrace\right)$ and $\cC_{\text{ind}} = \mathcal{O}\left(\frac{\Delta^0L}{\varepsilon^2}\min\left\lbrace d, 1 + \frac{d}{\sqrt{n}}\right\rbrace\right).$
	\end{proposition}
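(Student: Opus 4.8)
The plan is to recall the general convergence rate of $\mathsf{MARINA}$ and then specialize it to the zero-Hessian-variance regime with each of the two quantizer families, carefully optimizing the free parameter $p$ in both cases. Concretely, the known $\mathsf{MARINA}$ analysis (refined via the weighted Hessian variance, as announced in the abstract and used in Theorem~\ref{thm:main_result}) gives iteration complexity $T = \mathcal{O}\!\left(\frac{\Delta^0}{\varepsilon^2}\left(L_- + \sqrt{\frac{1-p}{p}\,L_{\pm}^2 + \frac{1-p}{p}\cdot(\text{compression-variance constant})}\right)\right)$, where under $L_{\pm}=0$ the $L_{\pm}^2$ term drops out and what remains inside the square root is the $A$-constant from Assumption~\ref{ass:ab} (times $L$-type smoothness). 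The per-iteration expected communication cost per client is $p\cdot d + (1-p)\cdot(\text{compressed size})$; since both IQ (Definition~\ref{def:multidim_independent}) and CQ (Definition~\ref{def:multidim_correlated}) send $\Theta(d)$ bits for the dense coordinates but the bookkeeping is dominated by the $pd$ term, one takes $p = \Theta(1/d)$ so that $\mathcal{C} = \Theta\!\left(d\cdot T\cdot p\right)$-type balancing reproduces the claimed $\min\{d, \cdot\}$ structure. I would set this up once, abstractly, as a function of the single quantity $a := $ the $A$-constant of the quantizer, yielding $\mathcal{C}(a) \propto \frac{\Delta^0 L}{\varepsilon^2}\left(1 + \sqrt{\frac{1-p}{p}\,a}\right)$ after the optimal choice of $p$, with the convention that if this exceeds the trivial $\frac{\Delta^0 L d}{\varepsilon^2}$ of uncompressed $\mathsf{DGD}$ we cap it there (hence the $\min\{d,\cdot\}$).

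Next I would plug in the two $A$-constants. By Corollary~\ref{cor:multidim_independent_mse} and the remark following Corollary~\ref{cor:multidim_mse_quant}, IQ satisfies Assumption~\ref{ass:ab} with $A_{\text{ind}} = \frac{d}{n}$ (and $B=0$), while CQ satisfies it with $A_{\text{cor}} = \frac{d}{n^2}$. Substituting $a = \frac{d}{4n}$ and $a = \frac{d}{4n^2}$ respectively into the common formula $\mathcal{C}(a) \propto 1 + \sqrt{\frac{1-p}{p}\,a}$ and taking the ratio, the $\frac{\Delta^0 L}{\varepsilon^2}$ prefactor and the $\frac{1-p}{p}$ factor cancel, giving exactly
\begin{equation*}
    \frac{\mathcal{C}_{\text{ind}}}{\mathcal{C}_{\text{cor}}} = \frac{1 + \sqrt{\frac{1-p}{p}\cdot\frac{d}{4n}}}{1 + \sqrt{\frac{1-p}{p}\cdot\frac{d}{4n^2}}}.
\end{equation*}
Since $\frac{d}{4n} \ge \frac{d}{4n^2}$ and $t \mapsto 1 + \sqrt{ct}$ is increasing in $c$, the ratio is $\ge 1$ for every $p \in (0,1]$ (the endpoint $p=1$ gives ratio exactly $1$, consistent with the stated "$\forall p \in [0,1]$, $\mathcal{C}_{\text{cor}} \le \mathcal{C}_{\text{ind}}$").

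For the "in particular" claims I would carry out the explicit $p$-optimization: minimizing $p\cdot d + \sqrt{\frac{1-p}{p}\,a}$-type expressions (more precisely, the product of communication-per-round and $T(p)$) over $p$. The balance point is $p \asymp \min\{1, \sqrt{a}/d\cdot(\dots)\}$, and substituting back yields, for CQ, $\mathcal{C}_{\text{cor}} = \mathcal{O}\!\left(\frac{\Delta^0 L}{\varepsilon^2}\min\{d, 1 + \frac{d}{n}\}\right)$ (because $\sqrt{a_{\text{cor}}} \asymp \sqrt{d}/n$, and one checks $d\cdot\frac{\sqrt d}{n\sqrt d} = \frac{d}{n}$ up to constants), and for IQ, $\mathcal{C}_{\text{ind}} = \mathcal{O}\!\left(\frac{\Delta^0 L}{\varepsilon^2}\min\{d, 1 + \frac{d}{\sqrt n}\}\right)$ similarly from $\sqrt{a_{\text{ind}}} \asymp \sqrt{d/n}$.

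The main obstacle — or rather the step requiring the most care — is making sure the abstract communication-complexity function $\mathcal{C}(a)$ is stated and optimized correctly, i.e. correctly tracking how the $\mathcal{O}(d)$-bits-per-dense-round and the $p$ versus $1-p$ weighting interact with the iteration count $T(p) \propto 1 + \sqrt{\frac{1-p}{p}\,a}$, and confirming that in the zero-Hessian-variance regime with homogeneous data $L_- = L_+ = L$ so that the smoothness prefactor is a single $L$. Once that common template is pinned down, the ratio identity is immediate and the two $\min$-bounds follow from routine optimization over $p$; no delicate estimates are needed beyond monotonicity of $c \mapsto 1 + \sqrt{ct}$.
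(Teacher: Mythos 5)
Your proposal follows essentially the same route as the paper: specialize the AB-inequality-based $\mathsf{MARINA}$ rate to $L_-=L_+=L$, $L_{\pm}=0$, multiply the iteration count $T\propto 1+\sqrt{\tfrac{1-p}{p}A}$ by the expected per-round bit cost (which is identical for IQ and CQ), observe that the common prefactor cancels in the ratio at a fixed $p$, and substitute $A_{\mathrm{ind}}=\tfrac{d}{4n}$ and $A_{\mathrm{cor}}=\tfrac{d}{4n^2}$, with monotonicity giving $\cC_{\text{cor}}\leq\cC_{\text{ind}}$. The only difference is cosmetic: for the two $\min\{\cdot\}$ bounds the paper simply invokes Lemma~12 of \citet{szlendak2021permutation} instead of redoing the optimization over $p$ by hand as you sketch (and your loose constants, e.g.\ $32$-bit bookkeeping and the factor $4$ in $A$, cancel in the ratio and are absorbed in the $\mathcal{O}(\cdot)$ exactly as in the paper).
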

	Experiments suggest that when $d=n\gg1$, the complexity ratio is approximately $7.29$ (see Section~\ref{appendix:complexity_analysis}). For a more detailed discussion on the complexities and the conditions on the relation between $d$ and $n$ under which the ratio can reach up to $32,$ please refer to the experimental Section~\ref{section:d-n-plane}.
	
	\subsection{Combination with Sparsification}\label{subsection:quant+sparse}
	
	Combining different compression techniques often yields better results than using any single technique on its own \citep{Safaryan_2021,pmlr-v202-wang23t}. Motivated by this observation, in \citep[Section 2.4]{szlendak2021permutation}, the authors obtain general results for the composition of independent unbiased compressors and PermK sparsifiers~\citep{szlendak2021permutation}.  We design a new compression Algorithm~\ref{alg:permk_cq}, incorporating correlated sparsification in the form of PermK and CQ.
	\begin{algorithm}[H]
 		\caption{\textsc{PermK+CQ} (NEW)}
		\label{alg:permk_cq}
		\begin{algorithmic}[1]
			\State\textbf{Input}: $a_1, a_2,\ldots, a_n \in \mathbb{R}^d$, $\tau \in \mathbb{N}$
			\State Consider the $n\times d$ block diagonal matrix with $\tau$ blocks of size $\frac{n}{\tau}\times\frac{d}{\tau}$ filled with ones.
				\begin{enumerate}
					\item Randomly permute the matrix rows, then the columns to obtain a matrix $M_{i,j}$ that will indicate \\whether the client $i$ should send its $j$-th component.
					\item Zero out $a_{i,j}$ entries where $M_{i,j}=0$. Scale the remaining entries by $\tau$.
					\item Independently perform CQ (\ref{def:multidim_correlated}) within $\tau$ groups of entries that each block was mapped into.
				\end{enumerate}
			\State\textbf{Output}: sparse, quantized vectors $\{P\mathcal{Q}_i(a_i)\}_{i=1}^n$.
		\end{algorithmic}
	\end{algorithm}
	Notice, that when $\tau=n,$ we obtain PermK sparsifier. On the other hand, if $\tau=1,$ then the compressor behaves as CQ (when there is only one block, we do not perform any permutations). That is, when $1<\tau<n,$ we indeed have a compressor which combines CQ and PermK. As experiments suggest, for some values of $L_{\pm},$ PermK+CQ is better than PermK. It means, that our new compressor is more robust to the introduced noise than PermK.
	
	\begin{corollary}\label{cor:permk_cq}
		The compressors described in Algorithm~\ref{alg:permk_cq} use $32 + d/\tau$ bits per client and are individually unbiased. Moreover, if we assume that for each $i\in[n]: a_i=a$, then the mean square error of the quantizers $\{\mathcal{Q}_i\}_{i=1}^n$ can be bounded from above as:
		\begin{equation*}
			\EE{\norm{\frac{1}{n}\sum\limits_{i=1}^n \parens{a_i - \mathcal{Q}_i(a_i)}}^2}\leq\frac{d\tau^2\norm{a}^2}{n^2}.
		\end{equation*}
	\end{corollary}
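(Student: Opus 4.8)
The plan is to condition on the random sparsification mask $M$ produced in step~2, analyze each of the $\tau$ diagonal blocks separately by reusing the properties of CQ already established above, and then recombine the blocks using the fact that distinct blocks act on disjoint coordinate sets.

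First, I would dispatch the bit count and the unbiasedness claim. The row and column permutations are drawn from shared randomness, so the support of a client's message need not be transmitted; by construction every row of $M$ carries exactly $d/\tau$ ones, so a client sends $d/\tau$ one-bit CQ symbols together with one $32$-bit scalar encoding the relevant norm, i.e.\ $32 + d/\tau$ bits. For unbiasedness, the block-diagonal template has $\tau\cdot\frac n\tau\cdot\frac d\tau = \frac{nd}{\tau}$ ones among $nd$ entries, and uniform row/column permutations make every entry equally likely to be a one, so $\mathbb{E}[M_{i,j}] = \tfrac1\tau$ and the rescaled masked entry $\tau M_{i,j}a_{i,j}$ is unbiased for $a_{i,j}$. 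Conditioning on $M$ and applying unbiasedness of CQ (Theorem~\ref{thm:mse_quant}) coordinate-wise inside each block, then removing the conditioning by the tower rule, yields $\EE{(P\mathcal{Q}_i(a_i))_j} = a_{i,j}$; this part needs no homogeneity assumption.

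For the MSE bound I would fix $M$ first. It splits the clients into groups $G_1,\dots,G_\tau$ of size $n/\tau$ and the coordinates into disjoint sets $S_1,\dots,S_\tau$ of size $d/\tau$, with $G_k$ responsible for $S_k$. Since $a_i=a$, all clients in $G_k$ receive the same vector $v^{(k)}$, supported on $S_k$ with $v^{(k)}_j = \tau a_j$ there, so $\norm{v^{(k)}}^2 = \tau^2\norm{a_{S_k}}^2 \le \tau^2\norm{a}^2$, writing $a_{S_k}$ for $a$ restricted to $S_k$. Using $\sum_k v^{(k)} = \tau a$ and $|G_k| = n/\tau$ one checks the rescaling identity $\frac1n\sum_{i=1}^n a_i = \frac1n\sum_{k=1}^\tau\sum_{i\in G_k}v^{(k)}$, whence
\begin{equation*}
\frac1n\sum_{i=1}^n\parens{a_i - \mathcal{Q}_i(a_i)} = \frac1\tau\sum_{k=1}^\tau Y_k, \qquad Y_k := \frac{\tau}{n}\sum_{i\in G_k}\parens{v^{(k)} - \mathcal{Q}_i(v^{(k)})},
\end{equation*}
and each $Y_k$ is supported on $S_k$. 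As the $S_k$ are pairwise disjoint, $\norm{\tfrac1\tau\sum_k Y_k}^2 = \tfrac1{\tau^2}\sum_{k=1}^\tau\norm{Y_k}^2$ holds exactly.

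Finally, conditionally on $M$ the CQ instances in distinct blocks use independent randomness, so Corollary~\ref{cor:multidim_mse_quant} applies to block $k$ as a CQ instance with $n/\tau$ clients, dimension $d/\tau$, and common input $v^{(k)}$, giving $\EE{\norm{Y_k}^2 \mid M} \le \frac{(d/\tau)\norm{v^{(k)}}^2}{(n/\tau)^2} \le \frac{(d/\tau)\tau^2\norm{a}^2}{(n/\tau)^2} = \frac{d\tau^3\norm{a}^2}{n^2}$. Summing over the $\tau$ blocks and dividing by $\tau^2$ gives $\EE{\norm{\frac1n\sum_i(a_i-\mathcal{Q}_i(a_i))}^2 \mid M} \le \frac{d\tau^2\norm{a}^2}{n^2}$, which is independent of $M$ and hence survives the outer expectation. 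The main thing to get right is the bookkeeping of the powers of $\tau$: the factor $\tau$ injected by the rescaling in step~2 enters $\norm{v^{(k)}}^2$ quadratically and interacts with the $1/(n/\tau)^2$ from the per-block CQ bound and the overall $1/\tau^2$; one should also double-check that the conditional-on-$M$ invocation of Corollary~\ref{cor:multidim_mse_quant} is sound, which it is, because inside a block CQ is fed identical inputs with fresh independent randomness — exactly its hypothesis.
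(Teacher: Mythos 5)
Your proof is correct and follows essentially the same route as the paper's: exploit that in the homogeneous case the PermK stage reconstructs the mean exactly, reduce to $\tau$ disjoint-support blocks each being a CQ instance with $n/\tau$ clients and common input scaled by $\tau$, apply Corollary~\ref{cor:multidim_mse_quant} per block, and sum, with the powers of $\tau$ combining to $\frac{d\tau^2\norm{a}^2}{n^2}$. Your explicit unbiasedness argument via $\EE{M_{i,j}}=1/\tau$ and the tower rule is a welcome addition the paper leaves implicit, and your bookkeeping (including the crude bound $\norm{a_{S_k}}\le\norm{a}$, which is exactly what yields the stated constant) checks out.
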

	
	Comparing this algorithm with CQ in the zero-Hessian-variance regime, we find that it communicates approximately $\tau$ times less data, albeit with its variance increased by a factor of $\tau^2$.
	
	\section{EXPERIMENTS}\label{section:experiments}
	
	We compare the performance of $\mathsf{MARINA}$ when combined with Correlated Quantization (CQ, see Definition~\ref{def:multidim_correlated}), Independent Quantization (IQ, see Definition~\ref{def:multidim_independent}) and DRIVE \citep{vargaftik2021drive}. The latter serves as a robust non-correlated quantization baseline. Our primary objective is to ascertain whether our findings can be practically extended beyond the zero-Hessian-variance regime. In the plots, we depict the relationship between the total gradient norm and the volume of information communicated from clients to the server. To ensure a fair comparison of the various methods, we optimized the value of $p$ and fine-tuned the stepsizes individually for each method and task. Where applicable, the selected stepsize is shown as a multiplier of the theoretical stepsize. For details, see Appendix~\ref{sec:stepsize}.
	
	In Proposition~\ref{proposition:marina_indep_vs_correlated} we prove that in zero-Hessian-variance regime $\mathsf{MARINA}+$CQ has a lower communication complexity than $\mathsf{MARINA}+$IQ. Below in Section~\ref{section:d-n-plane} we also perform a numerical analysis to determine the ratio of communication complexities for $\mathsf{MARINA}+$CQ and $\mathsf{MARINA}+$IQ for different $d$ and $n.$ We explore whether it is possible to achieve a maximal speedup of $32.$ 
	
	\subsection{Quadratic Optimization Tasks with Various Hessian Variances $L_\pm$}\label{sec:exps_var_hess} 
	
	We produced a range of quadratic optimization tasks with varying smoothness constants (see Figure~\ref{fig:baseline_comparison}). The procedures used to generate these tasks provide us with control over it (see Appendix for details). We opted for $d = 1024$, $n = 128$, regularization $\lambda = 0.001$, and noise scale $s \in \{0, 0.5, 1.0\}.$  We can see that CQ outperforms IQ and is on par with, if not superior to, DRIVE even in tasks where $L_\pm$ substantially deviates from $0$. We also included $\mathsf{DCGD}$ as a baseline.
	
	We established our theory in the zero-Hessian-variance regime, but it becomes more challenging when $L_{\pm}\neq 0.$ In this scenario, there is no theoretical stepsize for MARINA+CQ, but a fair comparison is imperative. In the absence of theoretical guidance, a common approach is to choose optimal stepsizes for each method under consideration. We adjust the stepsizes by selecting the optimal ones as multiples of theoretical stepsizes by powers of $2$ (see Appendix~\ref{sec:opt_hyperparams} for details).

    \begin{figure}[H]
		\centering
		\vspace{.0in}
		\includegraphics[scale=0.55]{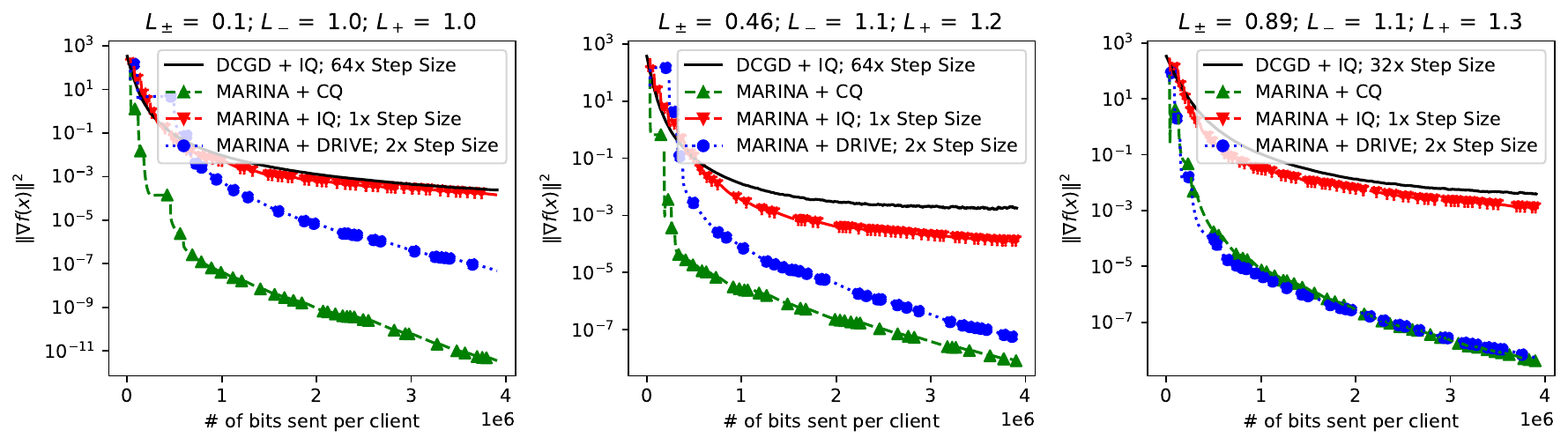}
		\vspace{.15in}
		\caption{Comparison of CQ, IQ, and DRIVE with $\mathsf{MARINA}$ on quadratic optimization tasks with diverse $L_\pm$ values}
		\label{fig:baseline_comparison}
	\end{figure}

    \subsection{Non-Convex Logistic Regression}
	
	We examine $\mathsf{MARINA}$ combined with CQ in a non-convex scenario using a logistic regression problem formulated with a non-convex regularizer:
	$$
	f(x) = \frac{1}{m}\sum\limits_{k=1}^m \log\(1 + \exp\(-y_ka_k^Tx\)\) + \lambda \sum\limits_{j=1}^d \frac{x_j^2}{1 + x_j^2},
	$$
	
	where $a_i \in \mathbb{R}^d,y_i\in \{-1, 1\}$ denote the training data, with $\lambda > 0$ as the regularization parameter. All our experiments utilized $\lambda = 0.1$.
	
	We obtained datasets from LibSVM~\citep{CC01a} and partitioned their $N$ entries into $n=d$ uniform segments. Table~\ref{tbl:libsvm} provides a summary of these datasets. Additionally, for reference, we included $\mathsf{DGD}$ (Gradient Descent) in the comparison, which can be seen as $\mathsf{MARINA}$ with no compression. We specifically choose such setting to test our approach. Notice that, mainly, it is infeasible to calculate $L_{\pm}$ for this practical problem. In general, $L_{\pm}$ should be different from zero, and we do not have a theory for $L_{\pm}\neq 0.$ The results are presented in Figure~\ref{fig:libsvm}. It can be seen that our approach is mostly dominant even in $L_{\pm}\neq 0$ case against a strong baseline $\mathsf{MARINA}+$DRIVE. We choose optimal stepsizes the same way as in Section~\ref{sec:exps_var_hess}.
	
	\begin{table}[h]
		\caption{Datasets and splitting of the data among clients} \label{tbl:libsvm}
		\begin{center}
			\begin{tabular}{lrrr}
				\textbf{Dataset}  & $n=d$ & $N$ & $\lfloor N/d \rfloor$ \\
				\hline
				\texttt{a9a} & 123 & 32,561 & 264 \\
				\texttt{madelon} & 500 &  2,000 & 4 \\
				\texttt{splice} & 60 &  1,000 & 16 \\
			\end{tabular}
		\end{center}
	\end{table}

    \begin{figure*}[h]
		\centering
		\vspace{.0in}
		\includegraphics[scale=0.54]{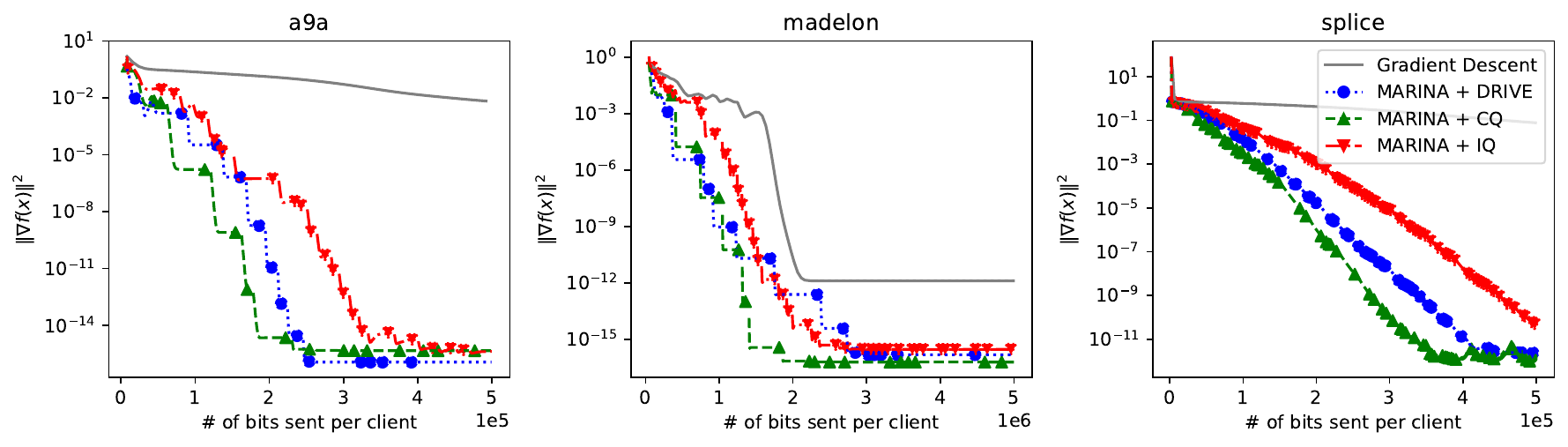}
		\vspace{.15in}
		\caption{Comparison of CQ, IQ and DRIVE with $\mathsf{MARINA}$ on LibSVM datasets. The points represent the uncompressed rounds of the algorithm}
		\label{fig:libsvm}
	\end{figure*}
	
	\subsection{Combination with PermK}
    
    \begin{minipage}{0.5\textwidth}
        As it was mentioned above in Section~\ref{subsection:quant+sparse}, combining different compression techniques, we may obtain better compressors. In Algorithm~\ref{alg:permk_cq}, we proposed a compressor that combines PermK with CQ. We empirically measure the performance of PermK+CQ on the same synthetic quadratic optimization tasks as in Section~\ref{sec:exps_var_hess}. We set $d = 1024$, $n = 3072$, $\tau=\sqrt{d}$, the regularization $\lambda = 0.001$ and $s=0.0$. We choose optimal stepsizes the same way as in Section~\ref{sec:exps_var_hess}.
    \end{minipage}\hfill
    \begin{minipage}{0.4\textwidth}
    		\vspace{.15in}
    		\centerline{\includegraphics[scale=0.57]{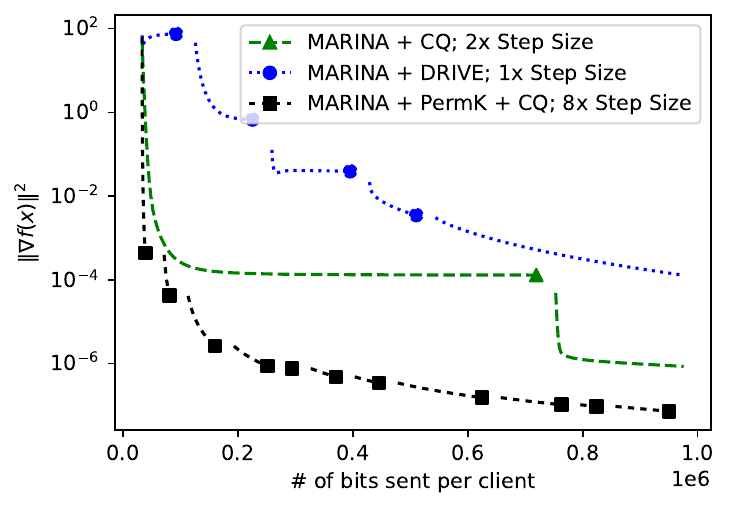}}
    		\vspace{.15in}
    		\captionof{figure}{Comparison of PermK+CQ, CQ and DRIVE with $\mathsf{MARINA}$ on quadratic optimization task with $L_\pm=0$}\label{fig:permk_cq}
    \end{minipage}
	
	\subsection{Numerical Complexity Analysis in the d-n Plane}\label{section:d-n-plane}
	Let us define an Improvement Factor as a ratio of communication complexities of $\mathsf{MARINA}$ equiped with some quantizers and $\mathsf{GD}$ (see Appendix~\ref{section:complexity_analysis}). To identify the region where CQ significantly outperforms IQ, we analyze the Improvement Factors of the $\mathsf{MARINA}$ algorithm as functions of $d$ and $n$, presuming we optimally choose the parameter $p$ of the algorithm (see \Cref{sec:opt_p}).
	
	\begin{figure}[H]
		\vspace{.0in}
		\includegraphics[scale=0.45]{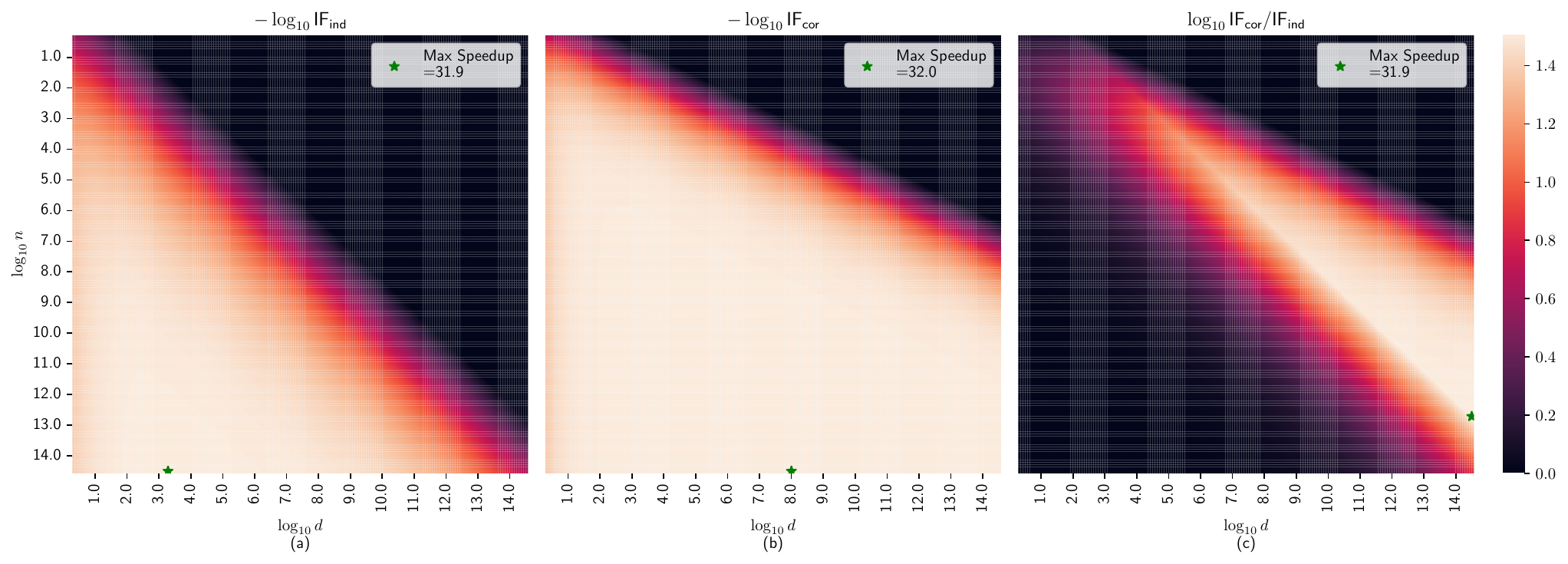}
		\vspace{.0in}
		\caption{(a)/(b): Logarithmic speedup of $\mathsf{MARINA}$ with Correlated/Uncorrelated Quantization over Gradient Descent. (c): Logarithmic speedup of $\mathsf{MARINA}+$CQ compared to $\mathsf{MARINA}+$IQ}
		\label{fig:dn_plane}
	\end{figure}
	
	From plot (a) of Figure~\ref{fig:dn_plane}, it's evident that $\mathsf{MARINA}$ with Independent Quantization defaults to Gradient Descent when $n \ll d$ and achieves the best possible speedup of x$32$ (owing to the compressor's 1-bit per coordinate behavior) when $n \gg d$. Conversely, Correlated Quantization is distinguished by $d = n^2$, as depicted in plot (b) of Figure~\ref{fig:dn_plane}. Consequently, plot (c) of Figure~\ref{fig:dn_plane} reveals that Correlated Quantization surpasses Independent Quantization by up to a factor of x$32$ when $\sqrt{d} < n < d$.
	
	\section{CONCLUSIONS AND FUTURE WORK}
	
	In the future, we would like to further explore the idea of correlated, potentially individually biased, compressors for Distributed Mean Estimation. We have shown that analyzing them in the framework of weighted AB-inequality can lead to both theoretical and practical improvements of distributed non-convex optimization algorithms. Theoretical analysis of $\mathsf{MARINA}+$CQ in case of nonhomogeneous clients remains an interesting open question for further research.
	\clearpage

	\bibliography{bibliography}
	
	\clearpage
	\appendix
	
	\onecolumn
	\section{Quantizers in Homogeneous Data Regime}
	\subsection{Analysis of Independent Quantization}
	\subsubsection{Proof of Proposition~\ref{proposition:onedim_independent_mse}}
	\begin{proof}
		Let us calculate the first moment of $\mathcal{Q}_i(x_i):$ for every $i\in[n],$ we have that 
		$$
		\Exp{\mathcal{Q}_i(x_i)} = \frac{r(x_i-l)}{r-l} + \frac{l(r-x_i)}{r-l} = x_i.
		$$
		Therefore, $\{\mathcal{Q}_i\}_{i=1}^n$ are individually unbiased. Further, let us calculate the variance of $\mathcal{Q}_i(x_i):$ for every $i$ we obtain that
		\begin{equation*}
			\begin{split}
				\Exp{\mathcal{Q}^2_i(x_i)} - \left(\Exp{\mathcal{Q}_i(x_i)}\right)^2 &= \frac{r^2(x_i-l)}{r-l} + \frac{l^2(r-x_i)}{r-l} - x_i^2\\
				& = \frac{(r-l)^2(x_i-l)(r-x_i)}{(r-l)^2}\\
				& = (r-l)^2\frac{x_i-l}{r-l}\left(1-\frac{x_i-l}{r-l}\right)\\
				& \leq \frac{(r-l)^2}{4}.
			\end{split}
		\end{equation*}
		Since the quantizers are independent and identically distributed, we obtain the following bound on the mean square error of the set $\{\mathcal{Q}_i\}_{i=1}^n:$
		\begin{equation*}
			\EE{\norm{\frac{1}{n}\sum\limits_{i=1}^n \parens{x_i - \mathcal{Q}_i(x_i)}}^2}=\frac{1}{n^2}\sum_{i=1}^{n}\Exp{\norm{\mathcal{Q}_i(x_i)-x_i}^2} \leq\frac{(r-l)^2}{4n}.
		\end{equation*} 
	\end{proof}
	\subsubsection{Proof of Corollary~\ref{cor:multidim_independent_mse}}
	\begin{proof}
		We apply the result of Theorem~\ref{thm:mse_quant} coordinate-wise and sum the variances.
	\end{proof}
	\subsection{Analysis of Correlated Quantization}
	\subsubsection{Proof of Theorem~\ref{thm:mse_quant}}
	\begin{proof}
		As shown in the proof of Theorem 2 in \cite{suresh2022correlated}, $\EE{\mathcal{Q}_i(a)} = a$ for all $a\in\mathbb{R}^d$, meaning that they are individually unbiased. We further analyze the variance in the homogeneous data regime.
		
		We first show the result when $l=0$ and $r=1$, one can obtain the final result by rescaling the quantizer operation $(r-l) \cdot \mathcal{Q}_i \left( \frac{a_i}{r-l} \right).$
		
		\begin{align*}
			\EE {\parens{ \sum^n_{i=1} a -  \sum^n_{i=1} \mathcal{Q}_i(a)}^2 }
			& =  \sum^n_{i=1}  \EE {\parens{ a -  \mathcal{Q}_i(a)}^2 } 
			+ \sum^n_{i=1} \sum_{j \neq i} \EE{\parens{  a -  \mathcal{Q}_i(a) } \parens{ a -  \mathcal{Q}_j(a) }}\\
			& = \sum^n_{i=1} a (1-a)
			+ \sum^n_{i=1} \sum_{j \neq i} \EE{\parens{  a -  \mathcal{Q}_i(a) }\parens{ a -  \mathcal{Q}_j(a) }} \\
			& =  na (1-a)
			+ \sum^n_{i=1} \sum_{j \neq i} \parens{\EE{ \mathcal{Q}_i(a)\mathcal{Q}_j(a)} -  a^2} \\
		\end{align*}
		where the second equality uses the fact that $\one_{\frac{\pi_i}{n} + \gamma_i < a}$ is a Bernoulli random variable with parameter $a$. We now calculate $ \EE{\mathcal{Q}_i(a)\mathcal{Q}_j(a)}$ for $i \neq j:$
		\begin{equation*}
			\begin{split}
				\EE{\mathcal{Q}_i(a)\mathcal{Q}_j(a)} &= \EE{\one_{\frac{\pi_i}{n} + \gamma_i < a}\one_{\frac{\pi_j}{n} + \gamma_j < a}}\\
				&= \EE{\one_{\frac{\pi_i}{n} + \gamma_i < a}\one_{\frac{\pi_j}{n} + \gamma_j < a} \parens{\one_{\pi_i>\pi_j} + \one_{\pi_j>\pi_i}}}
			\end{split}
		\end{equation*}
		Notice that since $\delta_j<\frac{1}{n}$ then $\one_{\frac{\pi_i}{n} + \gamma_i < a}\one_{\frac{\pi_j}{n} + \gamma_j < a}\one_{\pi_i>\pi_j} = \one_{\frac{\pi_i}{n} + \gamma_i < a} \one_{\pi_i>\pi_j}.$	Therefore,
		\begin{align*}
			\EE{\mathcal{Q}_i(a)\mathcal{Q}_j(a)} &= \EE{\one_{\frac{\pi_i}{n} + \gamma_i < a} \one_{\pi_i>\pi_j}} + \EE{\one_{\frac{\pi_j}{n} + \gamma_j < a} \one_{\pi_j>\pi_i}}.
		\end{align*}
		Let us calculate $\EE{\one_{\frac{\pi_i}{n} + \gamma_i < a} \one_{\pi_i>\pi_j}}$:
		\begin{equation*}
			\begin{split}
				\EE{\one_{\frac{\pi_i}{n} + \gamma_i < a} \one_{\pi_i>\pi_j}} & = \sum\limits_{k=0}^{n-1}\EE{\one_{\frac{\pi_i}{n} + \gamma_i < a} \one_{\pi_i>\pi_j}\one_{\pi_i = k}} \\
				& =\sum\limits_{k=0}^{n-1}\EE{\one_{n\gamma_i < na -k} \one_{k>\pi_j}\one_{\pi_i = k}} \\
				& \overset{\text{indep}}{=} \sum\limits_{k=0}^{n-1}\EE{\one_{n\gamma_i < nx -k}}\EE{ \one_{k>\pi_j}\one_{\pi_i = k}}\\
				& = \sum\limits_{k=0}^{n-1}\prob\parens{\one_{n\gamma_i < na -k}}\prob\parens{\brac{k>\pi_j}\cap\brac{\pi_i = k}}.\\
			\end{split}
		\end{equation*}
		
		We have that $$\prob\parens{\brac{k>\pi_j}\cap\brac{\pi_i = k}} = \prob\parens{\brac{k>\pi_j}|\pi_i = k}\prob\parens{\pi_i = k} = \frac{k}{n\parens{n-1}}.$$
		
		Therefore,
		\begin{equation*}
			\EE{\one_{\frac{\pi_i}{n} + \gamma_i < a} \one_{\pi_i>\pi_j}}
			= \frac{1}{n\parens{n-1}} \sum\limits_{k=0}^{n-1}k\prob\parens{\one_{n\gamma_i < na -k}}.
		\end{equation*}
		We have that 
		$$\prob\parens{\one_{n\gamma_i < na -k}} = 
		\begin{cases}
			1,& \text{ if } k < \floor{na} \\
			na-\floor{na},& \text{ if } k=\floor{na} \\
			0, & \text{ if } k > \floor{na}.
		\end{cases} 
		$$
		With that we get
		\begin{equation*}
			\begin{split}
				\EE{\one_{\frac{\pi_i}{n} + \gamma_i < a} \one_{\pi_i>\pi_j}} & = \frac{1}{n\parens{n-1}}\parens{ \sum\limits_{k=0}^{\floor{na}-1}k + \floor{na}\parens{na - \floor{na}} }\\
				& = \frac{1}{n\parens{n-1}}\parens{\frac{\floor{na}\parens{\floor{na}-1}}{2} + \floor{na}\parens{na - \floor{na}}} \\
				& = \frac{\floor{na}}{n\parens{n-1}}\parens{\frac{\floor{na}-1}{2} + na - \floor{na}}\\
				& = \frac{\floor{na}}{n\parens{n-1}}\parens{na - \frac{\floor{na}+1}{2}} \\
				& = \frac{na + \parens{\floor{na}-na}}{n\parens{n-1}}\parens{na - \frac{\floor{na}+1}{2}} \\
				& = \frac{1}{2}\parens{\frac{na^2}{\parens{n-1}} - \frac{a}{n-1} + \frac{na-\floor{na}}{n\parens{n-1}}\parens{\floor{na}+1-na}}.
			\end{split}
		\end{equation*}
		Let $c_a = \parens{na-\floor{na}}\parens{\floor{na}+1-na} $. We have that:
		$$\EE{\one_{\frac{\pi_i}{n} + \gamma_i < a} \one_{\pi_i>\pi_j}} = \frac{1}{2}\frac{1}{n\parens{n-1}}\parens{n^2a^2 - na + c_a}$$
		Using the symmetry between $i$ and $j$, we have that:
		$\EE{\one_{\frac{\pi_i}{n} + \gamma_i < a} \one_{\pi_i>\pi_j}} = \EE{\one_{\frac{\pi_j}{n} + \gamma_j < a} \one_{\pi_j>\pi_i}}.$ Therefore,
		\begin{align*}
			\EE{\mathcal{Q}_i(a)\mathcal{Q}_j(a)} &= \EE{\one_{\frac{\pi_i}{n} + \gamma_i < a} \one_{\pi_i>\pi_j}} + \EE{\one_{\frac{\pi_j}{n} + \gamma_j < a} \one_{\pi_j>\pi_i}}\\
			& = \frac{1}{n\parens{n-1}}\parens{n^2a^2 - na + c_a}.
		\end{align*}
		Further,
		\begin{align*}
			\EE{\mathcal{Q}_i(a)\mathcal{Q}_j(a)} - a^2 
			& = \frac{1}{n\parens{n-1}}\parens{na^2 - na + c_a}\\
			&= \frac{1}{n\parens{n-1}}\parens{-na\parens{1-a} + c_a}.
		\end{align*}
		Therefore, we have
		\begin{align*}
			\sum^n_{i=1} \sum_{j \neq i} \parens{\EE{ \mathcal{Q}_i(a)\mathcal{Q}_j(a)} -  a^2} & = -na\parens{1-a} + c_a.
		\end{align*}
		With that we get,
		\begin{align*}
			\EE {\parens{ \sum^n_{i=1} a -  \sum^n_{i=1} \mathcal{Q}_i(a)}^2 } &=  na (1-a)
			+ \sum^n_{i=1} \sum_{j \neq i} \parens{\EE{ \mathcal{Q}_i(a)\mathcal{Q}_j(a)} -  a^2} \\
			&=  na (1-a) -na\parens{1-a} + c_a\\
			& = c_a\\
			& = \parens{na-\floor{na}}\parens{\floor{na}+1-na} \\
			& = \parens{na-\floor{na}}\parens{1 - \parens{na -\floor{na}}} \\
			& \leq \frac{1}{4}.
		\end{align*}
		We get the variance by dividing this equality by $n^2.$ Therefore the variance of our quantizers is upper bounded by $\frac{1}{4n^2}.$
	\end{proof}
	\subsubsection{Proof of Corollary~\ref{cor:multidim_mse_quant}}
	\begin{proof}
		We apply the result of Theorem~\ref{thm:mse_quant} coordinate-wise and sum the variances.
	\end{proof}
	
	\subsection{Proof of Corollary~\ref{cor:permk_cq}}
	
	\begin{proof}
		Let us first show the amount of bits the compressor uses. First we notice that the image of the each block contains $n/\tau$ clients processing the same $d/\tau$ coordinates. Thus, quantized vectors require $32 + d/\tau$ bits per client, and the permutations require no extra communications, since they can be seeded.
		
		We denote Correlated Quantization by $\mathcal{Q}$, PermK by $\mathcal{P}.$
		\begin{equation*}
			\begin{split}
				\EE{\norm{\frac{1}{n}\sum_{i=1}^{n} \mathcal{Q}_i\( \mathcal{P}_i\( a_i \) \) - \frac{1}{n}\sum_{i=1}^{n} a_i }^2} &= \EE{\norm{\frac{1}{n}\sum_{i=1}^{n} \mathcal{P}_i\( a_i \) - \frac{1}{n}\sum_{i=1}^{n} a_i }^2}\\
				&+\EE{\norm{\frac{1}{n}\sum_{i=1}^{n} \mathcal{Q}_i\( \mathcal{P}_i\( a_i \) \) - \frac{1}{n}\sum_{i=1}^{n} \mathcal{P}_i\( a_i \) }^2} \\
				&= \EE{\norm{\frac{1}{n}\sum_{i=1}^{n} \mathcal{Q}_i\( \mathcal{P}_i\( a_i \) \) - \frac{1}{n}\sum_{i=1}^{n} \mathcal{P}_i\( a_i \) }^2} \\
				&\leq  \EE{\norm{\frac{1}{n}\sum_{i=1}^{n} \mathcal{Q}_i\( \mathcal{P}_i\( a \) \) - a }^2}.
			\end{split}
		\end{equation*}
		Notice that in the homogeneous case all the clients are equivalent, so the Correlated Quantization withing the image of each block can be percieved to use the same set of clients of size $n/\tau$. Let us denote the set of coordinates attributed to the image of the $k$-th block as $a_{k,j}=a_k: j\in\left[d/\tau\right]$. Since Correlated Quantization is independent coordinate-wise, the square error is additive coordinate-wise and the images of the blocks do not intersect coordinate-wise, we can freely move the sum over the blocks in and out of the norm. 
		\begin{equation*}
			\begin{split}
				\EE{\norm{\frac{1}{n}\sum_{i=1}^{n} \mathcal{Q}_i\( \mathcal{P}_i\( a \) \) - a }^2} &= \sum_{k=1}^\tau\EE{\norm{\frac{1}{n}\sum_{i=1}^{n/\tau}\tau \mathcal{Q}_i\(a_k\) - a_k}} \\
				&=\EE{\norm{\frac{\tau}{n}\sum_{i=1}^{n/\tau} \mathcal{Q}_i\(a\) - a}} \\
				&= \frac{d\norm{a}^2}{(n/\tau)^2} \\
				&= \frac{d\tau^2\norm{a}^2}{n^2}.
			\end{split}
		\end{equation*}
	\end{proof}

	\section{COMPLEXITY ANALYSIS}\label{section:complexity_analysis}
	\subsection{Proof of Proposition~\ref{proposition:marina_indep_vs_correlated}}
	\begin{proof}
		\cite{szlendak2021permutation} demonstrated that by integrating $\mathsf{MARINA}$ with a compressor that satisfies the AB-inequality and by choosing the stepsize
		\begin{align*}
			\label{theorem:AB_PL:gamma}
			\gamma \leq \left(L_- + \sqrt{\frac{\left(1-p\right)}{p}\left((A - B)L_+^2 + BL_\pm^2\right)}\right)^{-1},
		\end{align*}
		$\mathsf{MARINA}$ can identify a point $\widehat{x}^T,$ for which $\Exp{\|f(\widehat{x}^T)\|^{2}} \leq \frac{2\Delta^0}{\gamma T}.$ Notice, that in the homogeneous scenario, $L_- = L_+ = L$ and $L_\pm=0$. Thus, we can rewrite the upper bound on the stepsize 
		$$\gamma \leq \frac{1}{L}\left( 1 + \sqrt{\frac{\left(1-p\right)}{p}\left(A - B\right)}\right)^{-1}.$$
		
		Without quantization, each client will send $d$ coordinates, each composed of $32$ bits, which is $32d$ bits in total. With Correlated and Independent Quantizations, each client will send $d$ bits, plus the gradient's norm (32 bits). So $32 +d$ in total.
		In $\mathsf{MARINA}$, the expected number of bits sent per client in each step is
		$$p\parens{32d} + \parens{1-p}\parens{32+d}.$$
		To achieve an approximately stationary point $\widehat{x}$ such that $\EE{\|f(\widehat{x})\|^2} \leq \varepsilon^2$, we require 
		$$T = \frac{2\Delta_0}{\varepsilon^2} L \left( 1 + \sqrt{\frac{1-p}{p}(A - B)}\right)$$
		algorithm steps. Consequently, the overall communication complexity per client is:
		\begin{equation*}
			\begin{split}
				\cC(p) &= \parens{p\parens{32d} + \parens{1-p}\parens{32+d}}T \\
				&= \frac{2\Delta_0}{\varepsilon^2} L\parens{p\parens{32d} + \parens{1-p}\parens{32+d}} \left( 1 + \sqrt{\frac{\left(1-p\right)}{p}\left(A - B\right)}\right)\\
				&=  \underbrace{\frac{2\Delta_0}{\varepsilon^2} L \parens{32d}}_{\text{GD Rate}} \underbrace{\parens{p + \parens{1-p}\frac{32+d}{32d}} \left( 1 + \sqrt{\frac{\left(1-p\right)}{p}\left(A - B\right)}\right)}_{\text{Improvement Factor}}.
			\end{split}
		\end{equation*}
		
		\noindent\textbf{Correlated Quantizers}\\
		$\{\mathcal{Q}_i\}_{i\in[n]}\in\mathbb{U}\left(\frac{d}{n^2},0\right)$, therefore:
		
		\begin{align*}
			\cC_{\text{cor}}\parens{p} = & \frac{2\Delta_0}{\varepsilon^2} L \parens{32d}\parens{p + \parens{1-p}\frac{32+d}{32d}} \left( 1 + \sqrt{\frac{\left(1-p\right)}{p}\frac{d}{4n^2} }\right).
		\end{align*}
        Applying Lemma~12 from \citet{szlendak2021permutation}, we obtain that $\cC_{\text{cor}} = \mathcal{O}\left(\frac{\Delta^0L}{\varepsilon^2}\min\left\lbrace d, 1 + \frac{d}{n} \right\rbrace\right).$
		
		\noindent\textbf{Independent Quantizers}\\
		$\{\mathcal{Q}_i\}_{i\in[n]}\in\mathbb{U}\left(\frac{d}{4n},0\right)$, therefore:
		
		\begin{align*}
			\cC_{\text{ind}}\parens{p} = & \frac{2\Delta_0}{\varepsilon^2} L \parens{32d}\parens{p + \parens{1-p}\frac{32+d}{32d}} \left( 1 + \sqrt{\frac{\left(1-p\right)}{p}\frac{d}{4n} }\right).
		\end{align*}
            Applying Lemma~12 from \citet{szlendak2021permutation}, we obtain that $\cC_{\text{ind}} = \mathcal{O}\left(\frac{\Delta^0L}{\varepsilon^2}\min\left\lbrace d, 1 + \frac{d}{\sqrt{n}}\right\rbrace\right).$
		
		Clearly, $\forall p \in [0,1]$, $\cC_{\text{cor}}\parens{p} \leq \cC_{\text{ind}}\parens{p}$.
		
	\end{proof}
	\subsection{Extended Complexity Analysis:  the Case of n=d}\label{appendix:complexity_analysis}
	
	\noindent\textbf{Correlated Quantizers}\\
	Given $d = n \gg 1$, denoting $\frac{d}{4n^2} = \frac{1}{4n}=b$ and $\frac{32 + d}{32d} = \frac{1}{32} + \frac{1}{n}=a$, the complexity can be simplified using the fact that $b \rightarrow 0$ implies $p \rightarrow 0$:
	\begin{align*}
		\frac{-a\sqrt{b}}{2p^{3/2}} &+ \(1-a\)\(1+\sqrt{\frac{b}{p}}\) = 0,\\
		\frac{a\sqrt{b}}{2} &= \(1-a\)\(p^{3/2}+p\sqrt{b}\),\\
		p &\approx \left(\frac{a}{2(1-a)}\right)^{2/3} b^{1/3}.
	\end{align*}
	
	Substituting this into the Improvement Factor over GD, given by $\text{IF} = \frac{\cC_{\text{cor}}}{\cC_{\text{GD}}}$, we get:
	\begin{align*}
		\text{IF}_{\text{cor}} &= a + b^{1/3}\(\(a(1-a)^2/2\)^{2/3} + \(2a^2(1-a)\)^{2/3}\) + \smallO\(b^{1/3}\)= \\
		&= \frac{32 + d}{32d} + \smallO\(1\)= \frac{1}{32} + \smallO\(1\) \approx 0.03125.
	\end{align*}
	
	Thus, $\mathsf{MARINA}$ with Correlated Quantization demands approximately 0.03 times fewer bits communicated than Gradient Descent to find an $\varepsilon$-solution.
	
	\noindent\textbf{Uncorrelated Quantizers}\\
	By setting $d = n$, we denote $\frac{d}{4n} = \frac{1}{4}=b$ and $\frac{32 + d}{32d} = \frac{1}{32} + \frac{1}{d}=a$. Therefore, the problem of finding the optimal $p$ can be reduced to minimizing the function
	$$\mathcal{C}(p) = \parens{p+\parens{1-p}\(\frac{1}{32} + \smallO(1)\)}\parens{1+\sqrt{\frac{1}{4p} - \frac{1}{4}}}.$$
	Solving it numerically we get
	$$\lim_{n = d \to \infty} p \approx 0.02105,$$
	leading to
	$$\text{IF}_{\text{ind}} \approx 0.2277.$$
	
	Hence, $\mathsf{MARINA}$ with Independent Quantization requires approximately 0.23 times fewer bits communicated than Gradient Descent to find an $\varepsilon$-solution.
	
	The speedup due to correlation is then
	$$\frac{\text{IF}_{\text{ind}}}{\text{IF}_{\text{cor}}} \approx \frac{0.2277}{0.03125} \approx 7.29.$$
	
	\section{IMPROVED ANALYSIS OF $\mathsf{MARINA}$}

	\citet{szlendak2021permutation} analyzed the $\mathsf{MARINA}$ algorithm under the assumption of individual unbiasedness~\eqref{ass:individual_unbiasedness}. This algorithm employs compressed vectors to compute their average. While the assumption of individual unbiasedness guarantees the unbiasedness of the average when using independent compressors, allowing for correlated compressors at times offers a guarantee of the average's unbiasedness even without the need for individual unbiasedness. Moreover, similar to \cite{tyurin2022weightedab}, we can further refine the assumption with weights, allowing for even more sophisticated compressors.
	
	\begin{assumption}[Weighted AB-Inequality~\citep{tyurin2022weightedab}]\label{ass:weightedAB}
		Consider a random mapping $\mathcal{S}:\mathbb{R}^d\times\ldots\times\mathbb{R}^d\to\mathbb{R}^d$ to which we refer as ``combinatorial compressor'', such that, for all $a_i\in\mathbb{R}^d,\; i\in[n],$ $\Exp{\mathcal{S}\left(a_1,\ldots,a_n\right)}=\frac{1}{n}\sum_{i=1}^na_i.$ Assume that there exist $A,B\geq 0$ and weights $w_1,\ldots,w_n\in\mathbb{R}_+: \sum_{i=1}^nw_i=1$, such that, for all $a_i\in\mathbb{R}^d,\; i\in[n],$
		\begin{equation*}
			\Exp{\left\|\mathcal{S}\left(a_1,\ldots,a_n\right) - \frac{1}{n}\sum_{i=1}^{n}a_i\right\|^2}
			\leq \frac{A}{n}\sum_{i=1}^{n}\frac{1}{nw_i}\left\|a_ i\right\|^2 - B\left\|\frac{1}{n}\sum_{i=1}^{n}a_i\right\|^2.
		\end{equation*}
		The set of combinatorial compressors that satisfy this assumption is denoted by $\mathbb{S}\left(A,B,\{w_i\}_{i=1}^n\right).$
	\end{assumption}
	\begin{assumption}\label{ass:weighted_local_lipschitz_constant}
		Given a set of weights $w_1,\ldots,w_n\in\mathbb{R}_+: \sum_{i=1}^nw_i=1$, let $L_{+,w}\geq 0$ be the smallest constant such that $\frac{1}{n}\sum_{i=1}^n\frac{1}{nw_i}\norm{\nabla f_i(x) - \nabla f_i(y)}^2\leq L_{+,w}^2\norm{x-y}^2,$ for all $x,y\in\mathbb{R}^d.$
	\end{assumption}
	\begin{assumption}\label{ass:weighted_hessian_varaince}
		Given a set of weights $w_1,\ldots,w_n\in\mathbb{R}_+: \sum_{i=1}^nw_i=1$, let $L_{\pm,w}^2$ be the smallest constant such that
		\begin{equation*}
			\frac{1}{n}\sum_{i=1}^n\frac{1}{nw_i}\norm{\nabla f_i(x) - \nabla f_i(y)}^2 - \norm{\nabla f(x) - \nabla f(y)}^2
			\leq L_{\pm,w}^2\norm{x-y}^2,\quad x,y\in\mathbb{R}^d.
		\end{equation*}
		We refer to the quantity $L_{\pm,w}^2$ by the name of weighted Hessian variance.
	\end{assumption}
	
	We refine the analysis of $\mathsf{MARINA}$ under Assumption~\ref{ass:weightedAB}.
	
	\begin{theorem}\label{thm:main_result}
		Suppose that $\mathcal{S}^{t} \in \mathbb{S}\left(A,B,\{w_i\}_{i=1}^n\right),$ for all $t\in\mathbb{N},$ and that Assumptions~\ref{ass:diff},~\ref{ass:weightedAB},~\ref{ass:weighted_local_lipschitz_constant}~and~\ref{ass:weighted_hessian_varaince} hold.
		Then, for all $T > 0$ and for the stepsize $0 < \gamma \leq \left(L_- + \sqrt{\frac{1 - p}{p} \left(\left(A - B\right)L_{+,w}^2 + B L_{\pm,w}^2\right)}\right)^{-1},$ 
		the iterates produced by $\mathsf{MARINA}$ satisfy $\EE{\norm{\nabla f(\widehat{x}^T)}^2}\leq \frac{2\Delta_0}{\gamma T}$ where $\Delta_0 = f(x^0) - f^*$ and $\widehat{x}^T$ is chosen uniformly at random from $x^0, x^1, \ldots, x^{T-1}$.\\
	\end{theorem}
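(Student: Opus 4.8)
The plan is to follow the standard descent-lemma argument for \(\mathsf{MARINA}\), but track the error through the \emph{weighted} quantities \(L_{+,w}\) and \(L_{\pm,w}\) rather than the unweighted ones. First I would set \(g^{t} = \mathcal{S}^{t}(\nabla f_1(x^t),\ldots,\nabla f_n(x^t))\)-style estimator as produced by line~8 of Algorithm~\ref{alg:marina} (with the combinatorial compressor replacing the per-client \(\mathcal{Q}_i\)), and note that by the first part of Assumption~\ref{ass:weightedAB}, \(\EE{g^{t+1}\mid x^{t+1}, c_t} = \nabla f(x^{t+1})\) regardless of whether \(c_t=1\); this is exactly the point where we no longer need individual unbiasedness. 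Then, using \(L_-\)-smoothness of \(f\) (Assumption~\ref{ass:local_lipschitz_constant}) and the step \(x^{t+1}=x^t-\gamma g^t\), I would derive the one-step inequality
\begin{equation*}
\EE{f(x^{t+1})} \leq \EE{f(x^t)} - \frac{\gamma}{2}\EE{\norm{\nabla f(x^t)}^2} - \left(\frac{1}{2\gamma} - \frac{L_-}{2}\right)\gamma^2\EE{\norm{g^t}^2} + \frac{\gamma}{2}\EE{\norm{g^t - \nabla f(x^t)}^2},
\end{equation*}
which is the usual consequence of \(\langle \nabla f(x^t), g^t\rangle\) splitting plus Young's inequality.

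The heart of the argument is controlling the variance term \(G^t \eqdef \EE{\norm{g^t - \nabla f(x^t)}^2}\) via a recursion. When \(c_t=1\) we get \(G^{t+1}=0\); when \(c_t=0\), \(g^{t+1} - \nabla f(x^{t+1}) = \big(g^t - \nabla f(x^t)\big) + \big(\mathcal{S}^t(\{\nabla f_i(x^{t+1})-\nabla f_i(x^t)\}) - \frac{1}{n}\sum_i(\nabla f_i(x^{t+1})-\nabla f_i(x^t))\big)\), and the cross term vanishes by unbiasedness conditional on the past. So
\begin{equation*}
G^{t+1} \leq (1-p)\left(G^t + \EE{\norm{\mathcal{S}^t(\{\Delta_i^t\}) - \tfrac{1}{n}\textstyle\sum_i \Delta_i^t}^2}\right),
\end{equation*}
where \(\Delta_i^t = \nabla f_i(x^{t+1})-\nabla f_i(x^t)\). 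Now apply Assumption~\ref{ass:weightedAB}: the compression error is at most \(\frac{A}{n}\sum_i \frac{1}{nw_i}\norm{\Delta_i^t}^2 - B\norm{\frac1n\sum_i\Delta_i^t}^2\). Writing \(\frac{A}{n}\sum_i\frac{1}{nw_i}\norm{\Delta_i^t}^2 = (A-B)\cdot\frac1n\sum_i\frac{1}{nw_i}\norm{\Delta_i^t}^2 + B\big(\frac1n\sum_i\frac{1}{nw_i}\norm{\Delta_i^t}^2 - \norm{\nabla f(x^{t+1})-\nabla f(x^t)}^2\big) + B\norm{\nabla f(x^{t+1})-\nabla f(x^t)}^2\), the last \(B\)-term cancels the \(-B\norm{\frac1n\sum\Delta_i^t}^2\) term, and then Assumptions~\ref{ass:weighted_local_lipschitz_constant} and~\ref{ass:weighted_hessian_varaince} bound the first two pieces by \(\big((A-B)L_{+,w}^2 + B L_{\pm,w}^2\big)\norm{x^{t+1}-x^t}^2 = \big((A-B)L_{+,w}^2 + B L_{\pm,w}^2\big)\gamma^2\norm{g^t}^2\). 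Denote this coefficient \(M \eqdef (A-B)L_{+,w}^2 + BL_{\pm,w}^2\), so \(G^{t+1}\leq (1-p)G^t + (1-p)M\gamma^2\EE{\norm{g^t}^2}\).

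Finally I would combine the two recursions via a Lyapunov function \(\Phi^t = f(x^t) - f^{\inf} + \frac{\gamma(1-p)}{2p}G^t\) (the weight \(\frac{1-p}{2p}\) is chosen to telescope the \(G^t\) terms correctly). Plugging the variance recursion into the descent inequality, the telescoped coefficient of \(\gamma^2\EE{\norm{g^t}^2}\) becomes \(-\frac{1}{2\gamma}+\frac{L_-}{2}+\frac{(1-p)}{2p}M\gamma\cdot\frac{1}{\gamma}\) — more precisely the net coefficient is nonpositive exactly when \(\frac{1}{\gamma}\geq L_- + \frac{1-p}{p}M\gamma\), i.e. when \(\frac{1}{\gamma} - L_- \geq \frac{1-p}{p}M\gamma\); solving this quadratic-in-\(1/\gamma\) inequality yields the stated bound \(\gamma \leq \big(L_- + \sqrt{\frac{1-p}{p}M}\big)^{-1}\) (using \(L_- + \sqrt{\frac{1-p}{p}M}\) as an upper bound for the exact root). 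With that choice the \(\norm{g^t}^2\) terms drop, leaving \(\frac{\gamma}{2}\sum_{t=0}^{T-1}\EE{\norm{\nabla f(x^t)}^2}\leq \Phi^0 - \Phi^T \leq \Delta_0\) since \(G^0=0\) and \(\Phi^T\geq 0\); dividing by \(\gamma T/2\) and using that \(\widehat{x}^T\) is uniform over \(x^0,\ldots,x^{T-1}\) gives \(\EE{\norm{\nabla f(\widehat{x}^T)}^2}\leq \frac{2\Delta_0}{\gamma T}\).

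The main obstacle is the bookkeeping in the \(B\)-term cancellation: one must be careful that the \(-B\norm{\frac1n\sum_i\Delta_i^t}^2\) from Assumption~\ref{ass:weightedAB} is used to absorb precisely the extra \(B\norm{\nabla f(x^{t+1})-\nabla f(x^t)}^2\) that appears after introducing the weighted Hessian-variance term, so that the final per-step smoothness coefficient is \((A-B)L_{+,w}^2 + BL_{\pm,w}^2\) and not something larger like \(AL_{+,w}^2\). Everything else is a faithful re-run of the \citet{gorbunov2022marina}/\citet{szlendak2021permutation} analysis with \(L_{+,w}, L_{\pm,w}\) substituted for \(L_{+}, L_{\pm}\).
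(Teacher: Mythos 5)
Your overall route is the same as the paper's: the descent inequality of Lemma~\ref{lemma:page_lemma}, a recursion for $G^t \eqdef \EE{\norm{g^t - \nabla f(x^t)}^2}$ obtained from the conditional zero mean of the compression error together with Assumption~\ref{ass:weightedAB}, the $(A-B)/B$ split combined with Assumptions~\ref{ass:weighted_local_lipschitz_constant} and~\ref{ass:weighted_hessian_varaince} to get the coefficient $M \eqdef (A-B)L_{+,w}^2 + BL_{\pm,w}^2$, a Lyapunov function mixing $f(x^t)-f^{\inf}$ with a multiple of $G^t$, and Lemma~\ref{lemma:stepsize_page} to translate the resulting quadratic condition into the stated stepsize bound. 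The $B$-term cancellation you single out as the main obstacle is handled exactly as in the paper and is correct.

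There is, however, a concrete error in the Lyapunov weight. With your choice $\Phi^t = f(x^t)-f^{\inf} + \frac{\gamma(1-p)}{2p}G^t$ and your recursion $G^{t+1}\leq(1-p)G^t+(1-p)M\,\EE{\norm{x^{t+1}-x^t}^2}$, the total coefficient of $G^t$ after substitution is $\frac{\gamma}{2}+\frac{\gamma(1-p)^2}{2p}=\frac{\gamma(1-p)}{2p}+\frac{\gamma p}{2}$, which exceeds your weight by $\frac{\gamma p}{2}$: the $G^t$ terms do not telescope and the leftover accumulates. Cancellation requires a weight $c$ with $\frac{\gamma}{2}+c(1-p)\leq c$, i.e.\ $c\geq\frac{\gamma}{2p}$; the paper takes exactly $c=\frac{\gamma}{2p}$. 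Note also that your own stepsize condition $\frac{1}{\gamma}-L_-\geq\frac{1-p}{p}M\gamma$ is the one produced by the weight $\frac{\gamma}{2p}$ (it corresponds to the coefficient $\frac{\gamma(1-p)M}{2p}$ in front of $\EE{\norm{x^{t+1}-x^t}^2}$), not by your stated weight, so the write-up is internally inconsistent; replacing $\frac{\gamma(1-p)}{2p}$ by $\frac{\gamma}{2p}$ repairs the argument and recovers the paper's proof essentially verbatim. A separate but harmless misstatement: $\EE{g^{t+1}\mid x^{t+1},c_t}=\nabla f(x^{t+1})$ is false when $c_t=0$ (the conditional mean is $g^t+\nabla f(x^{t+1})-\nabla f(x^t)$); what the proof actually needs, and what you correctly invoke later, is only that the compression error $\mathcal{S}^t\left(\{\Delta_i^t\}_{i=1}^n\right)-\frac{1}{n}\sum_{i=1}^n\Delta_i^t$ has zero conditional mean.
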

	This contribution fundamentally shares the same goal as the contribution with the analysis of $\mathsf{MARINA}$ with correlated quantizers: we replace the prevalent framework of individual standalone compressors found in existing literature with a framework of dependent compressors. In the first contribution, quantizers are correlated, whereas in the second contribution, the compressors are not necessarily individually unbiased, but their average is.
	
	It was originally used for analyzing sampling schemes combined with the $\mathsf{PAGE}$ method~\citep{PAGE2021} in non-distributed optimization (see~\citep{tyurin2022weightedab}). However, we employ it for compressors and improve the communication complexity of a different method, MARINA, which is used in distributed optimization. Our work demonstrates that parameters such as $L_{\pm,w}$ and $L_{+,w}$ play a pivotal role in influencing the convergence of this variance-reduced algorithm. 
	
	Furthermore, the AB-inequality, even when used independently, proves useful for simpler problems of MSE minimization. It decomposes the bound on the MSE in a natural way, allowing us to compare and analyze different sets of compressors, and it is generally tight. 
	
	\noindent \textbf{Proof of Theorem~\ref{thm:main_result}.}
	In the proof, we follow closely the analysis of \citep{gorbunov2022marina} and adapt it to utilize the power of weighted Hessian variance (Assumption~\ref{ass:weighted_hessian_varaince}) and weighted AB assumption (Assumption~\ref{ass:weightedAB}). 
	We bound the term $\Exp{\norm{g^{t+1}-\nabla f(x^{t+1})}^2}$ in a similar way to \citep{gorbunov2022marina}, but make use of the weighted AB assumption. Other steps
	are essentially identical, but refine the existing analysis through weighted Hessian variance.
	
	First, we recall the following lemmas.
	
	\begin{lemma}[\cite{PAGE2021}]
		\label{lemma:page_lemma}
		Suppose that $L_{-}$ is finite and let $x^{t+1} = x^{t} - \gamma g^{t}$. Then for any $g^{t} \in \R^d$ and $\gamma > 0$, we have
		\begin{eqnarray}
			\label{eq:page_lemma}
			f(x^{t + 1}) \leq f(x^t) - \frac{\gamma}{2}\norm{\nabla f(x^t)}^2 - \left(\frac{1}{2\gamma} - \frac{L_-}{2}\right)
			\norm{x^{t+1} - x^t}^2 + \frac{\gamma}{2}\norm{g^{t} - x^t}^2.
		\end{eqnarray}
	\end{lemma}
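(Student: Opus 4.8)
\textbf{Proof plan for Theorem~\ref{thm:main_result}.}
The plan is to follow the standard $\mathsf{MARINA}$ Lyapunov argument of \citet{gorbunov2022marina} and \citet{szlendak2021permutation}, but to track the \emph{weighted} quantities $L_{+,w}$ and $L_{\pm,w}$ in place of $L_+$ and $L_\pm$ throughout. The Lyapunov function will be $\Phi^t \eqdef f(x^t) - f^{\inf} + \frac{c\gamma}{2}\norm{g^t - \nabla f(x^t)}^2$ for a suitable constant $c > 0$ to be fixed, where $g^t$ is the $\mathsf{MARINA}$ gradient estimator. The first ingredient is the descent Lemma~\ref{lemma:page_lemma} already recalled in the excerpt, applied with $x^{t+1} = x^t - \gamma g^t$; this gives
\begin{equation*}
	f(x^{t+1}) \le f(x^t) - \frac{\gamma}{2}\norm{\nabla f(x^t)}^2 - \left(\frac{1}{2\gamma} - \frac{L_-}{2}\right)\norm{x^{t+1}-x^t}^2 + \frac{\gamma}{2}\norm{g^t - \nabla f(x^t)}^2.
\end{equation*}

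The second and crucial ingredient is the recursion controlling the estimator error $\Exp{\norm{g^{t+1} - \nabla f(x^{t+1})}^2}$. Conditioning on $x^t$, $g^t$ and the Bernoulli coin $c_t$: with probability $p$ we have $g^{t+1} = \nabla f(x^{t+1})$ so the error is zero, and with probability $1-p$ we have $g^{t+1} - \nabla f(x^{t+1}) = g^t - \nabla f(x^t) + \left(\mathcal{S}^{t}(\Delta_1^t,\ldots,\Delta_n^t) - \frac1n\sum_i \Delta_i^t\right)$, where $\Delta_i^t = \nabla f_i(x^{t+1}) - \nabla f_i(x^t)$. Since $\mathcal{S}^t$ is unbiased (Assumption~\ref{ass:weightedAB}), the cross term vanishes in expectation, and the compression error is bounded via the weighted AB-inequality by $\frac{A}{n}\sum_i \frac{1}{n w_i}\norm{\Delta_i^t}^2 - B\norm{\frac1n\sum_i \Delta_i^t}^2$. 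Now Assumption~\ref{ass:weighted_local_lipschitz_constant} bounds $\frac1n\sum_i\frac{1}{nw_i}\norm{\Delta_i^t}^2 \le L_{+,w}^2\norm{x^{t+1}-x^t}^2$, and Assumption~\ref{ass:weighted_hessian_varaince} combined with the definition lets us write $\frac{A}{n}\sum_i\frac{1}{nw_i}\norm{\Delta_i^t}^2 - B\norm{\frac1n\sum_i\Delta_i^t}^2 \le \left((A-B)L_{+,w}^2 + B L_{\pm,w}^2\right)\norm{x^{t+1}-x^t}^2$ (here one uses $\norm{\frac1n\sum_i\Delta_i^t}^2 = \norm{\nabla f(x^{t+1}) - \nabla f(x^t)}^2$ together with the identity $\frac{A}{n}\sum\frac{1}{nw_i}\norm{\Delta_i}^2 - B\norm{\bar\Delta}^2 = (A-B)\frac1n\sum\frac{1}{nw_i}\norm{\Delta_i}^2 + B\left(\frac1n\sum\frac{1}{nw_i}\norm{\Delta_i}^2 - \norm{\bar\Delta}^2\right)$, valid when $A \ge B$; if $A < B$ the bound only improves). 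This yields
\begin{equation*}
	\Exp{\norm{g^{t+1} - \nabla f(x^{t+1})}^2 \mid x^t, g^t} \le (1-p)\norm{g^t - \nabla f(x^t)}^2 + (1-p)\left((A-B)L_{+,w}^2 + BL_{\pm,w}^2\right)\norm{x^{t+1}-x^t}^2.
\end{equation*}

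The third step is to assemble the Lyapunov recursion: combine the descent inequality with $\frac{c\gamma}{2}$ times the error recursion, take full expectations, and telescope. Choosing $c = \frac{1}{p}$ makes the coefficient of $\Exp{\norm{g^t - \nabla f(x^t)}^2}$ collapse appropriately, and the $\norm{x^{t+1}-x^t}^2$ terms are nonpositive overall precisely when $\frac{1}{2\gamma} - \frac{L_-}{2} - \frac{(1-p)}{2p}\left((A-B)L_{+,w}^2 + BL_{\pm,w}^2\right) \ge 0$, i.e. when $\gamma \le \left(L_- + \sqrt{\frac{1-p}{p}\left((A-B)L_{+,w}^2 + BL_{\pm,w}^2\right)}\right)^{-1}$ — which is exactly the stated stepsize condition (the square-root form follows from the standard trick $\frac{1}{\gamma} \ge L_- + \frac{(1-p)}{p}\gamma\,K$ being implied by $\gamma \le (L_- + \sqrt{\tfrac{1-p}{p}K})^{-1}$, taking $K = (A-B)L_{+,w}^2 + BL_{\pm,w}^2$). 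Discarding the nonpositive $\norm{x^{t+1}-x^t}^2$ contribution, summing from $t=0$ to $T-1$, using $g^0 = \nabla f(x^0)$ so the initial error term vanishes, and dividing by $\gamma T/2$ gives $\frac1T\sum_{t=0}^{T-1}\Exp{\norm{\nabla f(x^t)}^2} \le \frac{2(f(x^0) - f^{\inf})}{\gamma T} = \frac{2\Delta_0}{\gamma T}$, which is the claim since $\widehat{x}^T$ is uniform over $x^0,\ldots,x^{T-1}$.

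The main obstacle is the second step: correctly routing the weighted AB-inequality so that the combination $(A-B)L_{+,w}^2 + BL_{\pm,w}^2$ emerges, rather than a looser $A L_{+,w}^2$ bound. The delicate point is that $B$ must multiply the Hessian-variance term $L_{\pm,w}^2$ and not $L_{+,w}^2$, which requires the algebraic splitting of $\frac{A}{n}\sum\frac{1}{nw_i}\norm{\Delta_i}^2 - B\norm{\bar\Delta}^2$ above and the observation that $\bar\Delta = \nabla f(x^{t+1}) - \nabla f(x^t)$ exactly. Everything else — the descent lemma, the conditional-expectation bookkeeping over the Bernoulli coin, and the telescoping — is routine and mirrors \citep{gorbunov2022marina,szlendak2021permutation} verbatim up to replacing unweighted constants by weighted ones.
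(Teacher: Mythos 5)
The statement you were asked to prove is Lemma~\ref{lemma:page_lemma}, the one-step descent inequality for the update $x^{t+1}=x^t-\gamma g^t$ under $L_-$-smoothness (the paper itself imports it from \citet{PAGE2021} without proof). Your submission does not prove this lemma at all: it is a proof plan for Theorem~\ref{thm:main_result}, and it explicitly treats Lemma~\ref{lemma:page_lemma} as a given ingredient (``the descent Lemma already recalled in the excerpt''). With respect to the assigned statement there is therefore a genuine gap --- nothing in your text establishes inequality~\eqref{eq:page_lemma}.

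What is actually required is short. By Assumption~\ref{ass:local_lipschitz_constant}, $f(x^{t+1}) \le f(x^t) + \langle\nabla f(x^t),\, x^{t+1}-x^t\rangle + \frac{L_-}{2}\norm{x^{t+1}-x^t}^2$. Since $x^{t+1}-x^t=-\gamma g^t$, the identity $\langle a,b\rangle = \frac12\left(\norm{a}^2+\norm{b}^2-\norm{a-b}^2\right)$ with $a=\nabla f(x^t)$ and $b=g^t$ gives $\langle\nabla f(x^t),\, x^{t+1}-x^t\rangle = -\frac{\gamma}{2}\norm{\nabla f(x^t)}^2 - \frac{1}{2\gamma}\norm{x^{t+1}-x^t}^2 + \frac{\gamma}{2}\norm{g^t-\nabla f(x^t)}^2$, and substituting into the smoothness bound yields \eqref{eq:page_lemma} (the final term in the paper's display, $\norm{g^t-x^t}^2$, is a typo for $\norm{g^t-\nabla f(x^t)}^2$, which is the form used later in the Lyapunov argument). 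As a side remark, your sketch of Theorem~\ref{thm:main_result} does mirror the paper's actual argument for that theorem --- the weighted AB splitting producing $(A-B)L_{+,w}^2+BL_{\pm,w}^2$, the $\frac{\gamma}{2p}$ Lyapunov weight, and the stepsize condition via Lemma~\ref{lemma:stepsize_page} --- but it answers a different question than the one posed.
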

	
	\begin{lemma}[\cite{EF21}]
		\label{lemma:stepsize_page}
		Let $a,b>0.$ If $0 \leq \gamma \leq \frac{1}{\sqrt{a}+b},$ then $a \gamma^{2}+b \gamma \leq 1$. Moreover, the bound is tight up to the factor of 2 since $\frac{1}{\sqrt{a}+b} \leq \min \left\{\frac{1}{\sqrt{a}}, \frac{1}{b}\right\} \leq \frac{2}{\sqrt{a}+b}.$
	\end{lemma}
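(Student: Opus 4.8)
The plan is to prove both halves by elementary manipulation, using only that $\gamma$ is nonnegative and bounded above by $1/(\sqrt a + b)$, together with the fact that squaring a number in $[0,1]$ does not increase it.

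First I would establish $a\gamma^2 + b\gamma \le 1$. The hypothesis gives $\gamma \le \frac{1}{\sqrt a + b} \le \frac{1}{\sqrt a}$, hence $\sqrt a\,\gamma \le 1$. Since $0 \le \sqrt a\,\gamma \le 1$, we have $a\gamma^2 = (\sqrt a\,\gamma)^2 \le \sqrt a\,\gamma$. Adding $b\gamma \ge 0$ to both sides yields $a\gamma^2 + b\gamma \le \sqrt a\,\gamma + b\gamma = (\sqrt a + b)\gamma \le (\sqrt a + b)\cdot\frac{1}{\sqrt a + b} = 1$, which is the claimed bound.

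For the two-sided chain showing tightness, the left inequality $\frac{1}{\sqrt a + b} \le \min\{1/\sqrt a,\, 1/b\}$ is immediate: since $a,b>0$ we have $\sqrt a + b \ge \sqrt a > 0$ and $\sqrt a + b \ge b > 0$, so $\frac{1}{\sqrt a + b} \le \frac{1}{\sqrt a}$ and $\frac{1}{\sqrt a + b} \le \frac{1}{b}$, hence the bound holds for the minimum as well. For the right inequality $\min\{1/\sqrt a,\, 1/b\} \le \frac{2}{\sqrt a + b}$, I would split into two symmetric cases. If $\sqrt a \le b$, then $\sqrt a + b \le 2b$, so $\frac{2}{\sqrt a + b} \ge \frac{1}{b} = \min\{1/\sqrt a,\, 1/b\}$; the case $b \le \sqrt a$ is handled the same way with the roles of $\sqrt a$ and $b$ interchanged. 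In both cases the inequality holds, completing the chain.

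There is no genuine obstacle here; the only points requiring a moment's attention are the observation that $t \mapsto t^2$ is nonincreasing relative to the identity on $[0,1]$ (used to pass from $a\gamma^2$ to $\sqrt a\,\gamma$) and the case split for the factor-of-two tightness estimate. No earlier results from the paper are needed.
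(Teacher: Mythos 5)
Your proof is correct: both the bound $a\gamma^2 + b\gamma \le (\sqrt{a}+b)\gamma \le 1$ via $\sqrt{a}\,\gamma \le 1$, and the two-sided tightness chain with the case split on $\min\{1/\sqrt{a},1/b\}$, are exactly the standard elementary argument. The paper does not reprove this lemma (it simply cites \citet{EF21}), and your argument matches the cited proof's approach, so there is nothing to add.
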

	
	Next, we get an upper bound of $\ExpCond{\norm{g^{t+1}-\nabla f(x^{t+1})}^2}{x^{t+1}}.$
	
	\begin{lemma}
		\label{lemma:upper_bound_variance}
		Let us consider $g^{t + 1}$ from Algorithm~\ref{alg:marina} and 
		assume, that Assumptions~\ref{ass:diff},~\ref{ass:individual_unbiasedness},~\ref{ass:weightedAB},~\ref{ass:weighted_local_lipschitz_constant}~and~\ref{ass:weighted_hessian_varaince} hold, then
		\begin{eqnarray}
			\label{eq:lemma_gradient_estimate_bound}
			\ExpCond{\norm{g^{t+1}-\nabla f(x^{t+1})}^2}{x^{t+1}}
			&\leq& (1 - p)\left(\left(A - B\right)L_{+,w}^2 + BL_{\pm,w}^2\right)\norm{x^{t+1} - x^{t}}^2 \notag\\
			&& \qquad + (1 - p)\norm{g^t - \nabla f(x^{t})}^2.
		\end{eqnarray}
	\end{lemma}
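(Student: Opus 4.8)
The plan is to condition on $x^{t+1}$ (equivalently, on all randomness up to and including the draw of $c_t$ is \emph{not} yet done — we condition on $x^t, g^t$ and hence on $x^{t+1}=x^t-\gamma g^t$, but average over $c_t$ and the compression randomness). Writing $g^{t+1}=\nabla f(x^{t+1})$ on the event $c_t=1$ (probability $p$) and $g^{t+1}=g^t+\mathcal{S}^{t}\big(\nabla f_1(x^{t+1})-\nabla f_1(x^t),\ldots,\nabla f_n(x^{t+1})-\nabla f_n(x^t)\big)$ on the event $c_t=0$ (probability $1-p$), the first term contributes zero to $\norm{g^{t+1}-\nabla f(x^{t+1})}^2$. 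So
\begin{equation*}
	\ExpCond{\norm{g^{t+1}-\nabla f(x^{t+1})}^2}{x^{t+1}}
	= (1-p)\,\ExpCond{\norm{g^t+\mathcal{S}^{t}(\cdots) - \nabla f(x^{t+1})}^2}{x^{t+1}}.
\end{equation*}
Here I use that $\mathcal S^t$ is independent of $c_t$ and of the history.

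Next, I would set $a_i \eqdef \nabla f_i(x^{t+1})-\nabla f_i(x^t)$, so that $\frac1n\sum_i a_i = \nabla f(x^{t+1})-\nabla f(x^t)$, and rewrite the vector inside the norm as $\big(g^t-\nabla f(x^t)\big) + \big(\mathcal S^t(a_1,\ldots,a_n) - \tfrac1n\sum_i a_i\big)$. Because $\mathcal S^t\in\mathbb S(A,B,\{w_i\})$ is unbiased in the sense $\Exp{\mathcal S^t(a_1,\ldots,a_n)}=\tfrac1n\sum_i a_i$, the cross term vanishes under the conditional expectation, leaving
\begin{equation*}
	\ExpCond{\norm{g^t+\mathcal{S}^{t}(\cdots) - \nabla f(x^{t+1})}^2}{x^{t+1}}
	= \norm{g^t-\nabla f(x^t)}^2 + \ExpCond{\norm{\mathcal S^t(a_1,\ldots,a_n)-\tfrac1n\textstyle\sum_i a_i}^2}{x^{t+1}}.
\end{equation*}
The weighted AB-inequality (Assumption~\ref{ass:weightedAB}) bounds the last term by $\frac{A}{n}\sum_{i=1}^n\frac{1}{nw_i}\norm{a_i}^2 - B\norm{\tfrac1n\sum_i a_i}^2$.

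Finally I would translate the smoothness-type quantities: by definition $\frac{A}{n}\sum_i\frac{1}{nw_i}\norm{a_i}^2 - B\norm{\tfrac1n\sum_i a_i}^2 = A\big(\tfrac1n\sum_i\tfrac{1}{nw_i}\norm{a_i}^2 - \norm{\nabla f(x^{t+1})-\nabla f(x^t)}^2\big) + (A-B)\norm{\nabla f(x^{t+1})-\nabla f(x^t)}^2$; wait — the clean route is to write it as $(A-B)\big(\tfrac1n\sum_i\tfrac{1}{nw_i}\norm{a_i}^2\big) + B\big(\tfrac1n\sum_i\tfrac{1}{nw_i}\norm{a_i}^2 - \norm{\nabla f(x^{t+1})-\nabla f(x^t)}^2\big)$, which uses $A\ge B$ is not even needed, only that the decomposition is algebraic. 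Then Assumption~\ref{ass:weighted_local_lipschitz_constant} gives $\tfrac1n\sum_i\tfrac{1}{nw_i}\norm{a_i}^2 \le L_{+,w}^2\norm{x^{t+1}-x^t}^2$ and Assumption~\ref{ass:weighted_hessian_varaince} gives $\tfrac1n\sum_i\tfrac{1}{nw_i}\norm{a_i}^2 - \norm{\nabla f(x^{t+1})-\nabla f(x^t)}^2 \le L_{\pm,w}^2\norm{x^{t+1}-x^t}^2$, so the bracket is at most $\big((A-B)L_{+,w}^2 + BL_{\pm,w}^2\big)\norm{x^{t+1}-x^t}^2$. Multiplying through by $(1-p)$ yields \eqref{eq:lemma_gradient_estimate_bound}.

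The only delicate point — not really an obstacle — is the bookkeeping of what is being conditioned on: one must be careful that $c_t$ is averaged out (giving the factor $1-p$ in front) while $x^{t+1}$, and therefore the vectors $a_i$, are treated as fixed, and that the compressor $\mathcal S^t$ at step $t$ is drawn independently of everything in the history; this is exactly the setup in \citep{gorbunov2022marina,szlendak2021permutation} and carries over verbatim. The substantive content is simply replacing the (unweighted) AB-inequality and Hessian-variance bound of \citep{szlendak2021permutation} with their weighted analogues, which is a term-by-term substitution once the decomposition above is in place.
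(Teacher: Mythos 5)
Your proposal is correct and follows essentially the same route as the paper's own proof: condition on $x^{t+1}$, pull out the factor $(1-p)$ from the Bernoulli variable, use unbiasedness of the combinatorial compressor $\mathcal{S}^t$ to drop the cross term, apply the weighted AB-inequality to the compression error, and split the resulting bound into the $(A-B)L_{+,w}^2$ and $BL_{\pm,w}^2$ pieces via Assumptions~\ref{ass:weighted_local_lipschitz_constant} and~\ref{ass:weighted_hessian_varaince}. No gaps; the bookkeeping of the conditioning that you flag is handled exactly as in the paper.
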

	
	\begin{proof}
		In the view of definition of $g^{t + 1}$, we get
		\begin{align*}
			&\ExpCond{\norm{g^{t+1}-\nabla f(x^{t+1})}^2}{x^{t+1}} \\
			&= (1-p) \ExpCond{\norm{g^t + \samplefunc^t\left(\{\nabla f_{i}(x^{t+1}) - \nabla f_{i}(x^t)\}_{i=1}^n\right) - \nabla f(x^{t+1})}^2}{x^{t+1}}\\
			&= (1-p)\ExpCond{\norm{\samplefunc^t\left(\{\nabla f_{i}(x^{t+1}) - \nabla f_{i}(x^t)\}_{i=1}^n\right) - \nabla f(x^{t+1}) + \nabla f(x^t)}^2}{x^{t+1}}\\
			&+ (1-p)\norm{g^t - \nabla f(x^t)}^2.\\
		\end{align*}
		In the last inequality we used unbiasedness of $\samplefunc^t.$ Next, from weighted AB inequality, we have
		\begin{align*}
			&\ExpCond{\norm{g^{t+1}-\nabla f(x^{t+1})}^2}{x^{t+1}}\\
			&\leq (1-p)\ExpCond{\norm{\samplefunc^t\left(\{\nabla f_{i}(x^{t+1}) - \nabla f_{i}(x^t)\}_{i=1}^n\right) - \nabla f(x^{t+1}) + \nabla f(x^t)}^2}{x^{t+1}}\\
			&\quad + (1-p)\norm{g^t - \nabla f(x^t)}^2\\
			&\leq (1 - p)\left(\frac{A}{n}\left(\sum\limits_{i=1}^n\frac{1}{nw_i} \norm{\nabla f_i(x^{t+1}) - \nabla f_i(x^{t})}^2\right) - B\norm{\nabla f(x^{t + 1}) - \nabla f(x^t)}^2\right)\\
			&\quad + (1 - p)\norm{g^t - \nabla f(x^{t})}^2\\
			&= (1 - p)\Bigg(\left(A - B\right)\left(\sum\limits_{i=1}^n\frac{1}{n^2w_i} \norm{\nabla f_i(x^{t+1}) - \nabla f_i(x^{t})}^2\right) \\
			&\quad + B\left(\sum\limits_{i=1}^n\frac{1}{n^2w_i} \norm{\nabla f_i(x^{t+1}) - \nabla f_i(x^{t})}^2 - \norm{\nabla f(x^{t + 1}) - \nabla f(x^t)}^2\right)\Bigg) \\
			&\quad + (1 - p)\norm{g^t - \nabla f(x^{t})}^2.\\
		\end{align*}
		Using the definition of $L_{+,w}$ and $L_{\pm,w}$, we get
		\begin{equation*}
			\begin{split}
				\Exp{\norm{g^{t+1} - \nabla f(x^{t+1})}^2} &\leq (1 - p) \left(\left(A - B\right) L_{+,w}^2 + B L_{\pm,w}^2 \right)\norm{x^{t+1} - x^t}^2\\
				& + (1 - p) \norm{g^t - \nabla f(x^{t})}^2.\\
			\end{split}
		\end{equation*}
	\end{proof}
	
	We are ready to prove Theorem~\ref{thm:main_result}. Defining
	\begin{align*}
		&\Phi^t \eqdef f(x^t) - f^{\inf} + \frac{\gamma}{2p}\norm{g^t - \nabla f(x^t)}^2,\\
		&\widehat{L}^2 \eqdef \left(A - B\right)L_{+,w}^2 + BL_{\pm,w}^2,
	\end{align*}
	and using inequalities (\ref{eq:page_lemma}) and (\ref{eq:lemma_gradient_estimate_bound}), we get
	\begin{align*}
		\Exp{\Phi^{t+1}} 
		&\leq \Exp{f(x^t) - f^{\inf} - \frac{\gamma}{2}\norm{\nabla f(x^t)}^2 - \left(\frac{1}{2\gamma} - \frac{L_-}{2}\right)\norm{x^{t+1}-x^t}^2 + \frac{\gamma}{2}\norm{g^t - \nabla f(x^t)}^2} \\
		&\quad + \frac{\gamma}{2p}\Exp{(1-p)\widehat{L}^2\norm{x^{t+1}-x^t}^2 + (1-p)\norm{g^t - \nabla f(x^t)}^2} \\
		&= \Exp{\Phi^t} - \frac{\gamma}{2}\Exp{\norm{\nabla f(x^t)}^2} \\
		&\quad + \left(\frac{\gamma(1-p)\widehat{L}^2}{2p} - \frac{1}{2\gamma} + \frac{L_{-}}{2}\right) \Exp{\norm{x^{t+1}-x^t}^2} \\
		&\leq \Exp{\Phi^t} - \frac{\gamma}{2}\Exp{\norm{\nabla f(x^t)}^2},
	\end{align*}
	where in the last inequality we use 
	$$\frac{\gamma(1-p)\widehat{L}^2}{2p} - \frac{1}{2\gamma} + \frac{L}{2} \leq 0,$$ following from the stepsize choice and Lemma \ref{lemma:stepsize_page}.
	
	Summing up inequalities $\Exp{\Phi^{t+1}} \leq \Exp{\Phi^t} - \frac{\gamma}{2}\Exp{\norm{\nabla f(x^t)}^2}$ for $t=0,1,\ldots,T-1$ and rearranging the terms, we get
	\begin{eqnarray*}
		\frac{1}{T}\sum\limits_{t=0}^{T-1}\Exp{\norm{\nabla f(x^t)}^2} &\le& \frac{2}{\gamma T}\sum\limits_{t=0}^{T-1}\left(\Exp{\Phi^t}-\Exp{\Phi^{t+1}}\right) = \frac{2\left(\Exp{\Phi^0}-\Exp{\Phi^{T}}\right)}{\gamma T} \leq \frac{2\Delta_0}{\gamma T},
	\end{eqnarray*}
	since $g^0 = \nabla f(x^0)$ and $\Phi^{T} \geq 0$. Finally, using the tower property and the definition of $\hat x^T$ from Algorithm~\ref{alg:combinatorial_marina}, we obtain the desired result.
	
	Theorem~\ref{thm:main_result} is proven.
	
	\subsection{Example: Distributed Mean Estimation}
	
	Distributed Mean Estimation algorithms are commonly assessed based on their MSE~\citep{suresh2017meanestimation,mayekar2019ratq,vargaftik2021drive,suresh2022correlated}, with it often being bounded by a factor of its input's average square norm. Naturally, such algorithms fit into Weighted AB-inequality with certain $A$, uniform weights $w_i=\frac{1}{n}$ and $B=0$, allowing for their incorporation into $\mathsf{MARINA}$, in accordance with \Cref{thm:main_result}.
	
	\subsection{Example: Importance Sampling}
	
	In this example, we consider a combinatorial compressor which is a composition of unbiased independent compressors verifying \Cref{def:unbiased_compressor}, with importance sampling~\citep{tyurin2022weightedab}.
	
	Let us recall the definition. Fix $\tau > 0.$ For all $k \in [\tau],$ we define i.i.d. random variables 
	\begin{equation*}
		\chi_{k}=\begin{cases}
			1& \text{with probability} \ q_1\\
			2&\text{with probability} \ q_2\\
			\quad &\vdots\\
			n&\text{with probability}\ q_n,
		\end{cases} 
	\end{equation*}
	where $(q_1, \dots, q_n) \in \mathcal{S}^n$ (simple simplex). 
	A sampling 
	\begin{equation*}
		\samplefunc(a_1, \dots, a_n) \eqdef \frac{1}{\tau} \sum_{k=1}^{\tau} \frac{a_{\chi_{k}}}{n q_{\chi_{k}}}
	\end{equation*} is called the Importance sampling.
	Using the result from \cite{tyurin2022weightedab},  we get:
	\begin{align*}
		\EE{\norm{\frac{1}{\tau} \sum_{k=1}^{\tau} \frac{a_{\chi_{k}}}{n q_{\chi_{k}}} - \frac{1}{n}\sum_{i=1}^n a_i}^2} = \frac{1}{\tau} \left(\frac{1}{n}\sum_{i=1}^n \frac{1}{n q_i} \norm{a_{i}}^2 - \norm{\frac{1}{n}\sum_{i=1}^n a_i}^2\right).
	\end{align*}
	In particular for $\tau=1$ we get:
	\begin{equation}
		\EE{\norm{\samplefunc(a_1, \dots, a_n) - \frac{1}{n}\sum_{i=1}^n a_i}^2} = \frac{1}{n}\sum_{i=1}^n \frac{1}{n q_i} \norm{a_{i}}^2 - \norm{\frac{1}{n}\sum_{i=1}^n a_i}^2.
		\label{eq:importance_sampling}   
	\end{equation}
	
	With this method, instead of all $n$ clients participating in the compressed rounds, only one client (selected randomly based on its ``importance'') sends their compressed vector. Next we establish whether a composition of unbiased compressors with importance sampling satisfies Assumption~\ref{ass:weightedAB}.
	
	\begin{lemma}\label{lemma:sampling}
	Let us assume that an importance sampling function $\samplefunc$ satisfies~\eqref{eq:importance_sampling} with probabilities $q_i,$ and some random compressor $\cQ$ satisfies Definition \ref{def:unbiased_compressor}. Then
		\begin{align*}
			&\EE{\norm{\samplefunc\parens{\cQ\parens{a_1}, \dots, \cQ\parens{a_n}} - \frac{1}{n}\sum_{i = 1}^n a_i}^2} \leq  \frac{\parens{1+\omega}}{n^2}\sum \limits_{i = 1}^n \frac{1}{q_i} \norm{a_i}^2  -\norm{\frac{1}{n}\sum \limits_{i = 1}^n a_i}^2 .
		\end{align*}
		
	Thus, a composition of unbiased independent compressors with importance sampling with $\tau=1$ yields a combinatorial compressor $S\in\mathbb{S}\left(\omega + 1, 1,\left\lbrace \frac{L_i}{\sum_{i=1}^nL_{i}}\right\rbrace_{i=1}^n\right).$
	\end{lemma}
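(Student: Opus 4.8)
The plan is to separate the two independent sources of randomness — the compressor and the importance sampler — by conditioning on the compressor outputs. Write $b_i \eqdef \cQ(a_i)$ and $\bar a \eqdef \frac{1}{n}\sum_{i=1}^{n} a_i$. Since the sampling indices $\chi_k$ are drawn independently of the randomness inside $\cQ$, conditioning on $(b_1,\dots,b_n)$ makes $\samplefunc(b_1,\dots,b_n)$ an ordinary $\tau=1$ importance-sampling estimate of $\frac{1}{n}\sum_{i=1}^{n} b_i$, so identity~\eqref{eq:importance_sampling} applies verbatim with each $a_i$ replaced by $b_i$; in particular $\samplefunc$ is conditionally unbiased for $\frac{1}{n}\sum_{i=1}^{n} b_i$. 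Using the tower property and expanding around the conditional mean $\frac{1}{n}\sum_{i=1}^{n} b_i$,
\begin{align*}
	\EE{\norm{\samplefunc(b_1,\dots,b_n) - \bar a}^2}
	&= \EE{\norm{\samplefunc(b_1,\dots,b_n) - \frac{1}{n}\sum_{i=1}^{n} b_i}^2}\\
	&\quad + \EE{\norm{\frac{1}{n}\sum_{i=1}^{n} b_i - \bar a}^2}.
\end{align*}

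For the first term I would take the conditional expectation using~\eqref{eq:importance_sampling}, obtaining $\frac{1}{n^2}\sum_{i=1}^{n}\frac{1}{q_i}\EE{\norm{b_i}^2} - \EE{\norm{\frac{1}{n}\sum_{i=1}^{n} b_i}^2}$. For the second term, since each $b_i - a_i$ has zero mean and the $\cQ(a_i)$ are independent across $i$, the cross terms vanish and it equals $\frac{1}{n^2}\sum_{i=1}^{n}\EE{\norm{b_i - a_i}^2}$; the same expansion gives $\EE{\norm{\frac{1}{n}\sum_{i=1}^{n} b_i}^2} = \norm{\bar a}^2 + \frac{1}{n^2}\sum_{i=1}^{n}\EE{\norm{b_i - a_i}^2}$. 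Substituting, the two copies of $\frac{1}{n^2}\sum_{i=1}^{n}\EE{\norm{b_i-a_i}^2}$ cancel exactly, leaving $\frac{1}{n^2}\sum_{i=1}^{n}\frac{1}{q_i}\EE{\norm{b_i}^2} - \norm{\bar a}^2$. Finally, $\EE{\norm{b_i}^2} = \norm{a_i}^2 + \EE{\norm{b_i - a_i}^2} \le (1+\omega)\norm{a_i}^2$ by Definition~\ref{def:unbiased_compressor}, and since $\frac{1}{q_i}\ge 0$ this yields the claimed bound $\frac{1+\omega}{n^2}\sum_{i=1}^{n}\frac{1}{q_i}\norm{a_i}^2 - \norm{\bar a}^2$.

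For the concluding assertion it remains to specialize the sampling law to $q_i = w_i = \frac{L_i}{\sum_{j=1}^{n} L_j}$, a valid point of the simplex. Then $\frac{1}{q_i} = \frac{1}{w_i}$, so the bound reads $\frac{A}{n}\sum_{i=1}^{n}\frac{1}{n w_i}\norm{a_i}^2 - B\norm{\bar a}^2$ with $A = \omega+1$ and $B = 1$, which is precisely Assumption~\ref{ass:weightedAB}; and $\EE{\samplefunc(\cQ(a_1),\dots,\cQ(a_n))} = \bar a$ follows by composing the unbiasedness of $\cQ$ with that of the $\tau=1$ importance sampler. Hence $\samplefunc \in \mathbb{S}(\omega+1, 1, \{w_i\}_{i=1}^{n})$.

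The only delicate point — and the one I would spell out carefully — is the validity of the conditioning step: identity~\eqref{eq:importance_sampling} is stated for deterministic inputs but is being invoked on the random vectors $b_i$, which is legitimate exactly because the sampler's randomness is independent of the compressors'. The remainder is elementary, but the cancellation of the $\EE{\norm{b_i-a_i}^2}$ terms is what turns a crude $(1+\omega)$-type estimate carrying a spurious extra variance term into the clean expression with $B=1$, so it deserves to be shown.
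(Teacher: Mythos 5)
Your proposal is correct and follows essentially the same route as the paper's proof: the tower-property split into sampler error plus compressor-averaging error, application of the importance-sampling identity~\eqref{eq:importance_sampling} conditionally on the compressor outputs, exact cancellation of the compressor-variance terms, and the bound $\EE{\norm{\cQ(a_i)}^2}\le(1+\omega)\norm{a_i}^2$. The only cosmetic difference is that you expand the averaging error coordinate-by-client using independence, while the paper keeps it as $\EE{\norm{\frac{1}{n}\sum_{i}(\cQ(a_i)-a_i)}^2}$ and cancels it directly via unbiasedness; both yield the same conclusion.
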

	
	\begin{proof}
		Using tower property we have
		\begin{equation*}
			\begin{split}
				\EE{\norm{\samplefunc\parens{\cQ\parens{a_1}, \dots, \cQ\parens{a_n}} - \frac{1}{n}\sum_{i = 1}^n a_i}^2} & = \EE{\norm{\samplefunc\parens{\cQ\parens{a_1}, \dots, \cQ\parens{a_n}} - \frac{1}{n}\sum\limits_{i=1}^n \cQ\parens{a_i}}^2} \\
				& + \EE{\norm{\frac{1}{n}\sum\limits_{i=1}^n \cQ\parens{a_i} - \frac{1}{n}\sum\limits_{i=1}^n a_i}^2}.\\
			\end{split}
		\end{equation*}
		Let us bound the second term:
		\begin{align*}
			&	\EE{\norm{\samplefunc\parens{\cQ\parens{a_1}, \dots, \cQ\parens{a_n}} - \frac{1}{n}\sum\limits_{i=1}^n \cQ\parens{a_i}}^2}\\ & \leq \frac{1}{n}\sum \limits_{i = 1}^n \frac{1}{n q_i}\EE{\norm{\cQ\parens{a_i}}^2} - \EE{\norm{\frac{1}{n}\sum \limits_{i = 1}^n \cQ\parens{a_i}}^2} \\
			& = \frac{1}{n}\sum \limits_{i = 1}^n \frac{1}{n q_i}\EE{\norm{\cQ\parens{a_i} - a_i}^2} + \frac{1}{n}\sum \limits_{i = 1}^n \frac{1}{n q_i}\norm{a_i}^2 - \EE{\norm{\frac{1}{n}\sum \limits_{i = 1}^n \cQ\parens{a_i}}^2} \\
			& \leq  \parens{1+\omega}\frac{1}{n}\sum \limits_{i = 1}^n \frac{1}{n q_i}\norm{a_i}^2 - \EE{\norm{\frac{1}{n}\sum \limits_{i = 1}^n \cQ\parens{a_i}}^2} \\
			& = \parens{1+\omega}\frac{1}{n}\sum \limits_{i = 1}^n \frac{1}{n q_i}\norm{a_i}^2 - \EE{\norm{\frac{1}{n}\sum \limits_{i = 1}^n \cQ\parens{a_i}-a_i}^2} -\norm{\frac{1}{n}\sum \limits_{i = 1}^n a_i}^2.\\
		\end{align*}
		Therefore
		\begin{equation*}
			\begin{split}
				\EE{\norm{\samplefunc\parens{\cQ\parens{a_1}, \dots, \cQ\parens{a_n}} - \frac{1}{n}\sum_{i = 1}^n a_i}^2}  &= \EE{\norm{\samplefunc\parens{\cQ\parens{a_1}, \dots, \cQ\parens{a_n}} - \frac{1}{n}\sum\limits_{i=1}^n \cQ\parens{a_i}}^2} \\
				&+\EE{\norm{\frac{1}{n}\sum\limits_{i=1}^n \cQ\parens{a_i} - \frac{1}{n}\sum\limits_{i=1}^n a_i}^2}\\
				&\leq \frac{\parens{1+\omega}}{n^2}\sum \limits_{i = 1}^n \frac{1}{q_i} \norm{a_i}^2  -\norm{\frac{1}{n}\sum \limits_{i = 1}^n a_i}^2.\\
			\end{split}
		\end{equation*}
	\end{proof}
	
	\begin{theorem}
		\label{thm:importance_sampling_MARINA}
		Let Assumptions ~\ref{ass:diff},~\ref{ass:weighted_local_lipschitz_constant}, \ref{ass:weighted_hessian_varaince} hold. Let Assumption~\ref{ass:local_lipschitz_constant}  hold for all $f_i$ with $L_{i},$ $i\in[n].$ Given combinatorial compressors based on importance sampling $\mathcal{S}\in\mathbb{S}\left(\omega + 1, 1,\left\lbrace \frac{L_i}{\sum_{i=1}^nL_{i}}\right\rbrace_{i=1}^n\right),$ assume that $0 < \gamma \leq \parens{L_{-} + L_{avg}\sqrt{\frac{1-p}{p}\left(\omega+1\right)}}^{-1}$ where $L_{avg} = \frac{1}{n}\sum_{i=1}^n L_i$. Then for all $T \geq 0$ the iterates produced by $\mathsf{MARINA}$ satisfy $\EE{\norm{\nabla f(\widehat{x}^T)}^2}\leq \frac{2\Delta_0}{\gamma T},$ where $\Delta_0 = f(x^0) - f^*$ and $\widehat{x}^T$ is chosen uniformly at random from $x^0, x^1, \ldots, x^{T-1}$.\\ 
	\end{theorem}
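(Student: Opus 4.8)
The plan is to recognize this as a direct corollary of Theorem~\ref{thm:main_result}: all that has to be done is to instantiate that theorem's parameters and verify that the stepsize prescribed here is no larger than the one it requires. Concretely, I would take $A=\omega+1$, $B=1$ and weights $w_i=L_i/\sum_{j=1}^n L_j$, which is exactly the class $\mathbb{S}(\omega+1,1,\{L_i/\sum_j L_j\}_{i=1}^n)$ to which $\mathcal{S}$ is assumed to belong (and whose defining membership already encodes that $\mathcal{S}$ has an unbiased average and satisfies Assumption~\ref{ass:weightedAB}). Assumptions~\ref{ass:diff}, \ref{ass:weighted_local_lipschitz_constant} and \ref{ass:weighted_hessian_varaince} are assumed, and $L_-$ is finite because $\nabla f$ is $L_{avg}$-Lipschitz whenever each $\nabla f_i$ is $L_i$-Lipschitz. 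So Theorem~\ref{thm:main_result} applies as soon as $\gamma\le\bigl(L_-+\sqrt{\tfrac{1-p}{p}\bigl((A-B)L_{+,w}^2+BL_{\pm,w}^2\bigr)}\bigr)^{-1}$, i.e., using $A-B=\omega$ and $B=1$, as soon as $\omega L_{+,w}^2+L_{\pm,w}^2\le(\omega+1)L_{avg}^2$.

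The heart of the argument is therefore a short bound on the weighted smoothness constants for this particular weighting. Since $\sum_j L_j=nL_{avg}$, we have $\frac{1}{nw_i}=\frac{L_{avg}}{L_i}$, so using Assumption~\ref{ass:local_lipschitz_constant} for each $f_i$ (i.e.\ $\norm{\nabla f_i(x)-\nabla f_i(y)}^2\le L_i^2\norm{x-y}^2$) inside the definition of $L_{+,w}$ gives
\[
\frac1n\sum_{i=1}^n\frac{1}{nw_i}\norm{\nabla f_i(x)-\nabla f_i(y)}^2=\frac1n\sum_{i=1}^n\frac{L_{avg}}{L_i}\norm{\nabla f_i(x)-\nabla f_i(y)}^2\le\frac1n\sum_{i=1}^nL_{avg}L_i\norm{x-y}^2=L_{avg}^2\norm{x-y}^2,
\]
hence $L_{+,w}^2\le L_{avg}^2$ (terms with $L_i=0$ contribute zero on both sides). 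As the quantity $\norm{\nabla f(x)-\nabla f(y)}^2$ subtracted in Assumption~\ref{ass:weighted_hessian_varaince} is nonnegative, this also yields $L_{\pm,w}^2\le L_{+,w}^2\le L_{avg}^2$.

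Putting these together, $\omega L_{+,w}^2+L_{\pm,w}^2\le(\omega+1)L_{avg}^2$, so $\sqrt{\tfrac{1-p}{p}\bigl((A-B)L_{+,w}^2+BL_{\pm,w}^2\bigr)}\le L_{avg}\sqrt{\tfrac{1-p}{p}(\omega+1)}$ and consequently $\bigl(L_-+L_{avg}\sqrt{\tfrac{1-p}{p}(\omega+1)}\bigr)^{-1}\le\bigl(L_-+\sqrt{\tfrac{1-p}{p}((A-B)L_{+,w}^2+BL_{\pm,w}^2)}\bigr)^{-1}$. Thus the stepsize in the statement is admissible for Theorem~\ref{thm:main_result}, and invoking that theorem directly gives $\EE{\norm{\nabla f(\widehat{x}^T)}^2}\le\frac{2\Delta_0}{\gamma T}$, as claimed. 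I do not anticipate any genuine obstacle here — the only care needed is the bookkeeping identity $w_i=q_i$ relating the importance-sampling probabilities to the AB-weights, and the degenerate case of clients with $L_i=0$.
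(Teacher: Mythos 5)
Your proposal is correct and takes essentially the same route as the paper: both reduce the statement to Theorem~\ref{thm:main_result} instantiated with $A=\omega+1$, $B=1$, $w_i=L_i/\sum_{j=1}^n L_j$, and then check that the prescribed stepsize is admissible because the weighted constants are controlled by $L_{avg}$. The only difference is that where the paper cites Section F of \citet{tyurin2022weightedab} (together with Lemma~\ref{lemma:sampling}) for $L_{+,w}^2=L_{\pm,w}^2=L_{avg}^2$, you verify the needed bounds $L_{+,w}^2\le L_{avg}^2$ and $L_{\pm,w}^2\le L_{+,w}^2$ by a direct one-line computation, which is perfectly adequate since membership in $\mathbb{S}\left(\omega+1,1,\{L_i/\sum_j L_j\}_{i=1}^n\right)$ is already part of the hypothesis.
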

	\noindent\textbf{Proof of \Cref{thm:importance_sampling_MARINA}}.
	From~\citep[Section F]{tyurin2022weightedab}, we know that by setting $q_i = \frac{L_i}{\sum_{i=1}^nL_{i}}$ for importance sampling, we obtain a sampling with $L_{+,w}^2 = L_{\pm,w}^2 = \parens{\frac{1}{n}\sum_{i=1}^nL_{i}}^2$ satisfying Assumptions \Cref{ass:weighted_local_lipschitz_constant} and \ref{ass:weighted_hessian_varaince} with $w_i = \frac{L_i}{\sum_{i=1}^nL_{i}}$. The proof is then complete by applying \Cref{thm:main_result} and \Cref{lemma:sampling}.
	
	Theorem~\ref{thm:importance_sampling_MARINA} is proven.

	\begin{corollary}\label{cor:is_comb_compr}
		Suppose assumptions of Theorem~\ref{thm:importance_sampling_MARINA} hold. Then the communication complexity of a run of $\mathsf{MARINA}$ method with importance sampling combinatorial compressors in order to reach an approximately stationary point is upper bounded by $\mathcal{O}\left( \frac{\Delta^0}{\varepsilon^2} \min\left\lbrace dL_{-}, \frac{dL_{-}}{n}+\frac{d\sqrt{\omega+1}L_{avg}}{\sqrt{n}} \right\rbrace\right)$.
	\end{corollary}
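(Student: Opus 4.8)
The plan is to feed the iteration-complexity bound of Theorem~\ref{thm:importance_sampling_MARINA} into a bit count specific to the importance-sampling combinatorial compressor, and then minimize over the probability parameter $p\in(0,1]$, exactly as in the proof of Proposition~\ref{proposition:marina_indep_vs_correlated}.

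First, fix $p$ and run $\mathsf{MARINA}$ with $\mathcal{S}\in\mathbb{S}(\omega+1,1,\{L_i/\sum_j L_j\}_{i=1}^n)$ and the largest admissible stepsize $\gamma=(L_-+L_{avg}\sqrt{\tfrac{1-p}{p}(\omega+1)})^{-1}$. Theorem~\ref{thm:importance_sampling_MARINA} gives $\EE{\norm{\nabla f(\widehat{x}^T)}^2}\le 2\Delta^0/(\gamma T)$, so it suffices to run
$$T(p)=\frac{2\Delta^0}{\varepsilon^2}\Big(L_-+L_{avg}\sqrt{\tfrac{1-p}{p}(\omega+1)}\Big)$$
iterations to reach an $\varepsilon$-stationary point.

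Second, count the bits. On a dense iteration (probability $p$) every client sends its full gradient, $\mathcal{O}(d)$ bits each; on a compressed iteration (probability $1-p$) the importance sampler activates a single client, which sends a $d$-dimensional (possibly quantized) vector of $\mathcal{O}(d)$ bits, so the per-client expected cost of that round is $\mathcal{O}(d)/n$. Hence the expected per-client bit cost per iteration is $\mathcal{O}(d)\big(p+\tfrac{1-p}{n}\big)$, and the communication complexity is $\mathcal{C}(p)=\mathcal{O}\big(d(p+\tfrac{1-p}{n})\big)\cdot T(p)$.

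Finally, optimize over $p$. It is enough to exhibit two choices: $p=1$ removes the variance term and yields $\mathcal{C}(1)=\mathcal{O}(\Delta^0 dL_-/\varepsilon^2)$, while $p=1/n$ gives $p+\tfrac{1-p}{n}\le 2/n$ and $\sqrt{\tfrac{1-p}{p}}\le\sqrt{n}$, hence $\mathcal{C}(1/n)=\mathcal{O}\big(\tfrac{\Delta^0}{\varepsilon^2}(\tfrac{dL_-}{n}+\tfrac{d\sqrt{\omega+1}L_{avg}}{\sqrt{n}})\big)$. Since the method is free to pick the better $p$, $\min_{p}\mathcal{C}(p)\le\min\{\mathcal{C}(1),\mathcal{C}(1/n)\}$, which is the asserted bound; a formal closed-form minimization can instead be obtained by invoking Lemma~12 of \citet{szlendak2021permutation} with $A-B=\omega$ and $\widehat{L}=\sqrt{\omega+1}\,L_{avg}$. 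There is no deep obstacle here: the only subtle points are the $1/n$ discount in the per-round bit count (a single sampled client communicates, in contrast to the $n$-client quantized $\mathsf{MARINA}$) and checking that the two regimes of $p$ reproduce the two branches of the stated minimum; the rest is substitution.
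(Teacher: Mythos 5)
Your proposal is correct and follows essentially the same route as the paper: plug the stepsize-driven iteration count from Theorem~\ref{thm:importance_sampling_MARINA} into the expected per-client bit count $\mathcal{O}(d)\left(p+\tfrac{1-p}{n}\right)$, then take the better of two concrete choices of $p$ (the paper uses $p=\tfrac{1}{32n}$ rather than $p=\tfrac{1}{n}$, which only changes constants) to recover the two branches of the minimum.
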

	\noindent \textbf{Proof of~\Cref{cor:is_comb_compr}.} To get an $\varepsilon$-solution it's sufficient to have $T$ iterations such that:
		$$\frac{2\Delta_0}{\gamma T} < \varepsilon^2 \iff \frac{2\Delta_0}{\gamma \varepsilon^2} < T.$$
		By taking $\gamma = \parens{L_{-} + L_{avg}\sqrt{\frac{1-p}{p}\left(\omega+1\right)}}^{-1}$, we get $T> \frac{2\Delta_0}{\varepsilon^2}\parens{L_{-} + L_{avg}\sqrt{\frac{1-p}{p}\left(\omega+1\right)}}.$
		
		Since we're doing an importance sampling with $\tau=1$, each round, with a probability $1-p$, only one client sends, on average, $\beta = \mathcal{O}(1)$ bits per coordinate.
		So the number of bits sent by round by a client is on average: $\parens{\parens{1-p}\frac{\beta}{n} + 32p}d$.
		The total complexity over all iterations is:
		$$\parens{\parens{1-p}\frac{\beta}{n} + 32p}d \times T =  \frac{2\Delta_0}{\varepsilon^2} d\parens{\parens{1-p}\frac{\beta}{n} + 32p}\parens{L_{-} + L_{avg}\sqrt{\frac{1-p}{p}\left(\omega+1\right)}}.$$
		In particular if we take $p = \frac{1}{32n}$ we get a complexity of $\mathcal{O}\left( \frac{\Delta^0}{\varepsilon^2} \parens{\frac{dL_{-}}{n}+\frac{d\sqrt{\omega+1}L_{avg}}{\sqrt{n}}} \right)$
		and if we take $p=1$ we get a communication complexity $\mathcal{O}\parens{\frac{\Delta^0}{\varepsilon^2} dL_{-}}.$\\
		Therefore, the communication complexity is upper-bounded by $\mathcal{O}\left( \frac{\Delta^0}{\varepsilon^2} \min\left\lbrace dL_{-}, \frac{dL_{-}}{n}+\frac{d\sqrt{\omega+1}L_{avg}}{\sqrt{n}} \right\rbrace\right).$
	
	\Cref{cor:is_comb_compr} is proven.\\
	
	Since $L_{avg}$ can be $\sqrt{n}$ times smaller than $L_{+}$, $\mathsf{MARINA}$ with importance sampling can converge up to $\sqrt{n}$ times faster than the original method. 
	\subsection{Experiments: Weighted $\mathsf{MARINA}$}\label{sec:exp-is-marina}
	
	We synthesized various quadratic optimization tasks with different smoothness constants $L_i$ (see Figure~\ref{fig:weighted}). We choose $d = 1024$, $n = 128$, the regularization $\lambda = 0.001$, and the noise scale $s\in\{0.0, 10.0\}$. We generated tasks so that the difference between $\max_i L_i$ and $\min_i L_i$ increases. Our experiments show that in various regimes $\mathsf{MARINA}$ combined with ISCC based on DRIVE has lower communication complexity than $\mathsf{MARINA}$ simply combined with DRIVE.
	
	\begin{figure}[h]
		\vspace{.0in}
		\centerline{\includegraphics[scale=0.57]{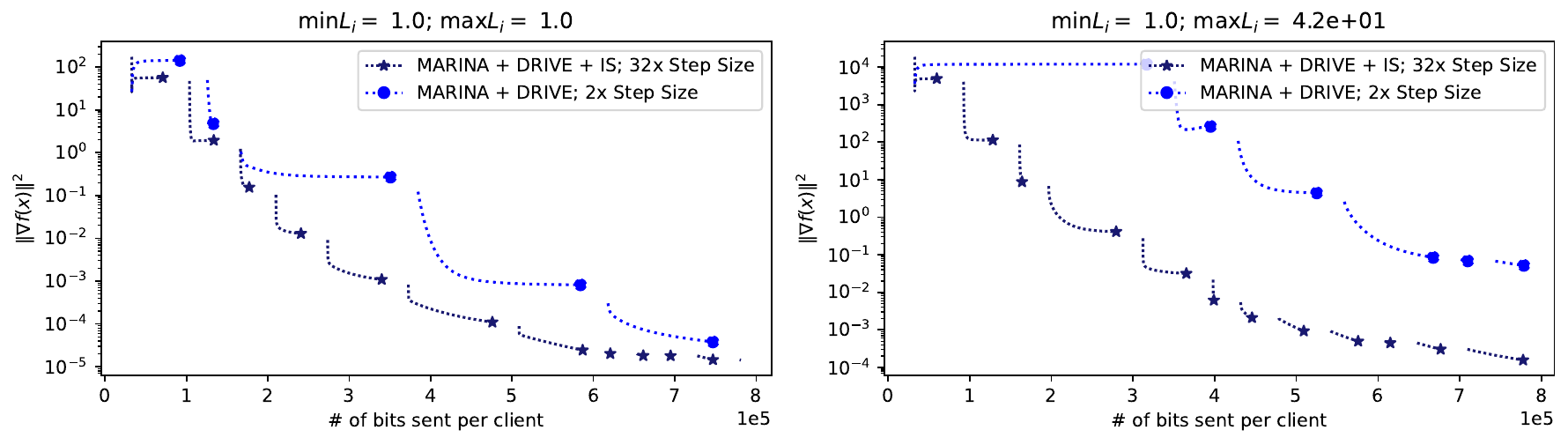}}
		\vspace{.15in}
		\caption{Comparison of DRIVE with or without Importance Sampling (IS) with $\mathsf{MARINA}$ on quadratic optimization tasks with diverse $L_\pm$ values}\label{fig:weighted}
	\end{figure}
	
	\begin{algorithm}[!t]
		\caption{MARINA with combinatorial compression}
		\label{alg:combinatorial_marina}
		\begin{algorithmic}[1]
			\State \textbf{Input:} initial point $x^0\in \R^d$, stepsize $\gamma>0$, probability ${p} \in (0, 1],$ number of iterations $T$
			\State $g^0 = \nabla f(x^0)$
			\For{$t =0,1,\dots,T-1$}
			\State Sample $c_t\sim\mathrm{Bern}(p)$
			\State $x^{t+1} = x^t - \gamma g^t$
			\State Generate a combinatorial compressor $\textbf{S}^t$
			\State \begin{varwidth}[t]{\linewidth}
				$g^{t+1}=\nabla f(x^{t+1})$ if $c_t=1,$ and $g^{t+1}=g^t + \textbf{S}^t\left(\{\nabla f_i(x^{t+1}) - \nabla f_i(x^{t})\}_{i=1}^n\right)$ otherwise
			\end{varwidth}
			\EndFor
		\end{algorithmic}
	\end{algorithm}
	
	\section{ADDITIONAL EXPERIMENTS DETAILS}\label{sec:apx-experiments}
	
	\subsection{Description of Compressors}
	
	\Cref{tbl:comp_comp} provides a comparative analysis of the compressors used in all the experiments. Notably, DRIVE transmits an equivalent number of bits per coordinate as CQ and IQ, given $d = 2^k$ for some integer $k$. To ensure a balanced comparison, we aim to choose $d$ as a power of $2$ wherever possible.
	
	\begin{table}[h]
		\caption{Comparison of the compressors used: $A$ and $B$ constants from AB-inequality~\ref{ass:ab} and the number of bits sent per client} \label{tbl:comp_comp}
		\begin{center}
			\begin{tabular}{lccc}
				\textbf{Compressor}  & $A$ & $B$ & \# of bits per client \\
				\hline
				CQ~(\Cref{def:multidim_correlated}) & $d/(4n^2)$ & $0$ & $32 + d$ \\
				IQ~(\Cref{def:multidim_independent}) & $d/(4n)$ & $0$ &  $32 + d$ \\
				DRIVE~\citep{vargaftik2021drive} & $(\pi/2-1)/n$ & $0$ &  $32 + 2^{\lceil\log_2d\rceil}$ \\
				No compression & $0$ & $0$ & $32\cdot d$
			\end{tabular}
		\end{center}
	\end{table}
	
	\subsection{Optimal Selection of Parameters}\label{sec:opt_hyperparams}
	
	\subsubsection{Identifying the Optimal Probability p}\label{sec:opt_p}
	
	Our objective is to determine the optimal probability $p_{opt}$ that reduces the communication complexity to its minimum. This is described by the equation:
	$$\mathcal{C}(p)=\frac{2\Delta_0}{\varepsilon^2}\parens{32dp+\alpha\parens{1-p}}\left(L_- + \sqrt{\frac{\left(1-p\right)}{p}\left((A - B)L_+^2 + BL_\pm^2\right)}\right),$$
	where $\alpha$ represents the expected total number of bits communicated to the server during the compressed round of $\mathsf{MARINA}$. When $B=0$, which is the case for quantization, the expression simplifies to:
	$$\mathcal{C}(p)=\frac{2\Delta_0}{\varepsilon^2}\parens{32dp+\alpha\parens{1-p}}\left(L_- + L_+\sqrt{\frac{\left(1-p\right)}{p}A}\right).$$
	
	We solve this problem numerically. This takes into account each problem's $L_+$, $L_-$, and each compressor's $\alpha$, $A$ to obtain $p_{opt}$. Specifically for CQ and its variants where $L_\pm \ne 0$, $A$ isn't explicitly defined and we extrapolate the equations from the zero-Hessian-variance regime.

	\subsubsection{Optimization of the Step Size}
	
	Having determined the value of $p$, we proceed to increase the step size. We increment the step size in multiples of $2$ ($2, 4, 8$, etc.) of the theoretically optimal step size. Our aim is to identify the step size that ensures the algorithm's best performance at $4\cdot10^6$ bits communicated from each client to the server. That number was chosen as sufficiently large to demonstrate relative convergence between different algorithms. The convergence plots, as well as details about the selected optimal step sizes can be found in Figure~\ref{fig:baseline_comparison_full}.

    \begin{figure}[H]
		\centering
		\vspace{.0in}
		\includegraphics[scale=0.55]{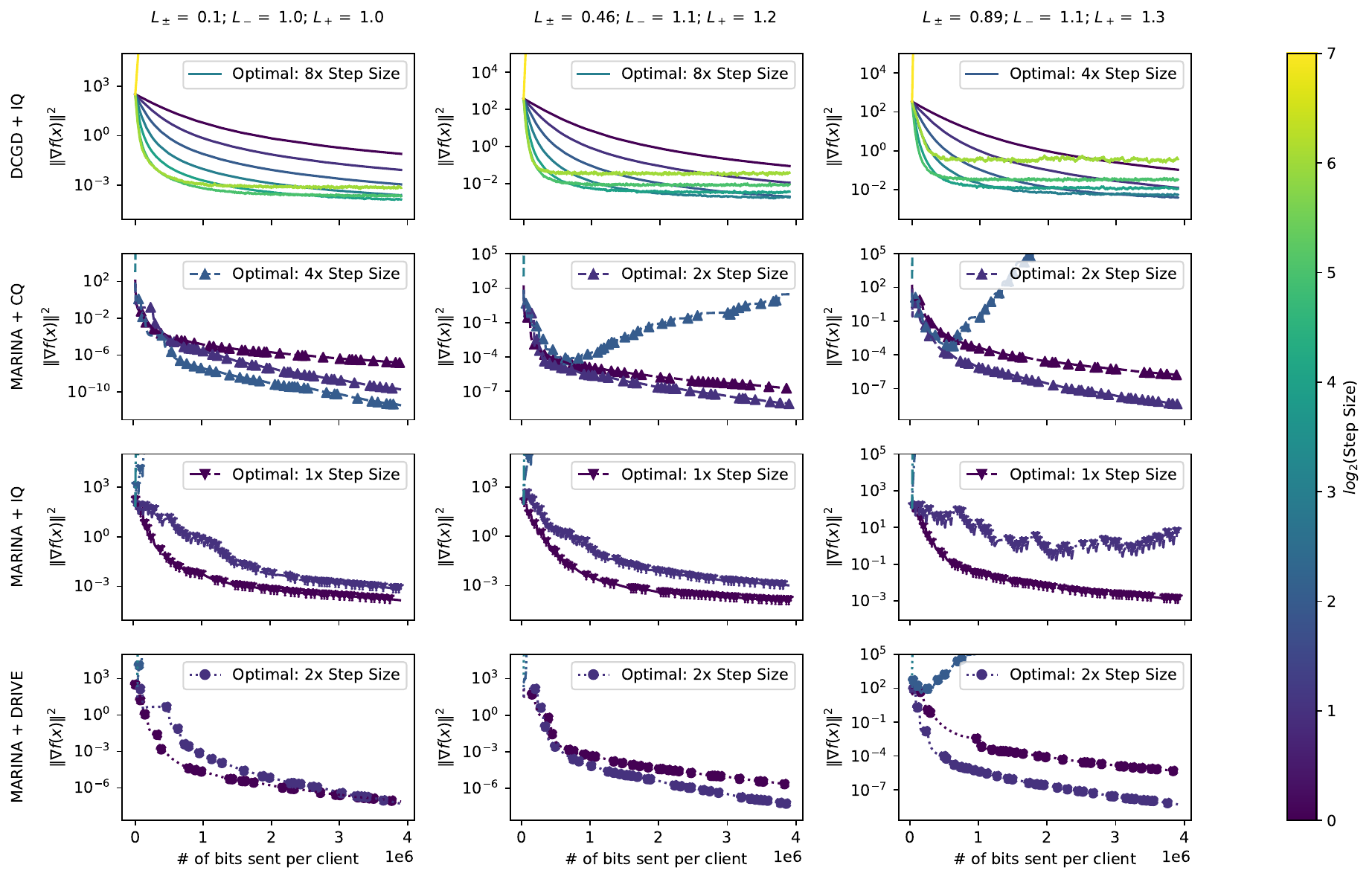}
		\vspace{.15in}
		\caption{Convergence of CQ, IQ, and DRIVE with $\mathsf{MARINA}$ with different step sizes on quadratic optimization tasks with diverse $L_\pm$ values}
		\label{fig:baseline_comparison_full}
	\end{figure}

	\subsection{Quadratic Optimization Tasks Generation}\label{sec:stepsize}
	
	Similar to \cite{tyurin2022weightedab}, we provide the algorithms used to generate artificial quadratic optimization tasks. \Cref{alg:l_pm_generation} and \Cref{alg:l_i_generation} allow us to control the smoothness constants $L_\pm$ and $L_i$, respectively, via the noise scales.
	
	\begin{algorithm}[H]
		\begin{center}
			\begin{minipage}{\textwidth}
				\noindent\textbf{Input}: number nodes $n$, dimension $d$, regularizer $\lambda$, and noise scale $s$.\newline
				\noindent For $i = 1 \ \text{to} \ n$:
				\begin{enumerate}
					\item Generate random noises $\nu_i^s=1 + s\xi_i^s$ and $\nu_i^b=s\xi_i^b$, i.i.d $\xi_i^s,\xi_i^b \sim \text{NormalDistribution}(0,1)$.
					\item Take vector $b_i = \frac{\nu_i^s}{4}\(-1+\nu_i^b, 0, \ldots,0\)\in \mathbb{R}^d$.
					\item Take the initial tridiagonal matrix
					$$
					\mathbf{A}_{i} = \frac{\nu_i^s}{4}
					\begin{bmatrix}
						2 &-1 & &0\\
						-1 & \ddots &\ddots& \\
						& \ddots&\ddots& -1\\
						0&&-1&2
					\end{bmatrix}
					\in \mathbb{R}^{d\times d}.
					$$
				\end{enumerate}
				\noindent Take the mean of matrices $\mathbf{A} = \frac{1}{n}\sum_{i=1}^n\mathbf{A}_i$.\\
				\noindent Find the minimum eigenvalue $\lambda_{min}\(\mathbf{A}\)$.\\
				\noindent For $i = 1 \ \text{to} \ n$:
				\begin{enumerate}
					\item Update matrix $\mathbf{A}_i = \mathbf{A}_i + \(\lambda - \lambda_{min}\(\mathbf{A}\)\)\mathbf{I}$.
				\end{enumerate}
				\noindent Take starting point $x^0 = (\sqrt{d}, 0,\ldots, 0)$.\\
				\noindent \textbf{Output}: matrices $\mathbf{A}_1,\ldots,\mathbf{A}_n$, vectors $b_1,\ldots,b_n$, starting point $x^0$.
			\end{minipage}
		\end{center}
		\caption{\textsc{Generate quadratic optimization task with controlled $L_\pm$} \citep{tyurin2022weightedab}}
		\label{alg:l_pm_generation}
	\end{algorithm}
	
	\begin{algorithm}[H]
		\begin{center}
			\begin{minipage}{\textwidth}
				\noindent\textbf{Input}: number nodes $n$, dimension $d$ and noise scale $s$.\newline
				\noindent For $i = 1 \ \text{to} \ n$:
				\begin{enumerate}
					\item Generate random noises $\nu_i^s=1 + s\xi_i^s$, i.i.d $\xi_i^s \sim \text{ExponentialDistribution}(1)$.\\
					\item Generate random noises $\nu_i^b=s\xi_i^b$, i.i.d $\xi_i^b \sim \text{NormalDistribution}(0,1)$.\\
					\item Take vector $b_i = \(-1+\nu_i^b, 0, \ldots,0\)\in \mathbb{R}^d$.
					\item Take the initial tridiagonal matrix
					$$
					\mathbf{A}_{i} = \frac{\nu_i^s}{4}
					\begin{bmatrix}
						2 &-1 & &0\\
						-1 & \ddots &\ddots& \\
						& \ddots&\ddots& -1\\
						0&&-1&2
					\end{bmatrix}
					\in \mathbb{R}^{d\times d}.
					$$
				\end{enumerate}
				\noindent Take starting point $x^0 = (\sqrt{d}, 0,\ldots, 0)$.\\
				\noindent \textbf{Output}: matrices $\mathbf{A}_1,\ldots,\mathbf{A}_n$, vectors $b_1,\ldots,b_n$, starting point $x^0$.
			\end{minipage}
		\end{center}
		\caption{\textsc{Generate quadratic optimization task with controlled $L_i$} \citep{tyurin2022weightedab}}
		\label{alg:l_i_generation}
	\end{algorithm}
	
	\section{AUXILIARY FACTS}
	\subsection{Proof of Proposition~\ref{proposition:zero_hessian_implies_homogeneity}}
	\begin{proof}
		Since all $f_i(x), i\in[n],$ are equal to $f(x),$ we have that 
		\begin{equation}\label{eq:zero_hessian_variance}
			\frac{1}{n}\sum_{i=1}^n\norm{\nabla f_i(x) - \nabla f_i(y)}^2 = \norm{\nabla f(x) - \nabla f(y)}^2, \quad x,y\in\mathbb{R}^d.
		\end{equation}
		It immediately implies that the Hessian variance $L_{\pm}^2$ is equal to zero. 
		
		In case when functions are identical up to some random linear perturbation, assume that, for $i\in[n],$ $f_i(x)=\varphi(x)+b_i^{\top}x+c_i,$ where $\varphi(x):\mathbb{R}^d\to\mathbb{R}$ is a differentiable function, $b_i\in\mathbb{R}^d,$ $c_i\in\mathbb{R}.$ Observe that in this case~\eqref{eq:zero_hessian_variance} also holds true, and, therefore, $L_{\pm}^2=0.$
	\end{proof}

\end{document}